\documentclass[wcp]{jmlrmod} 


\title{Online Isotonic Regression}
\usepackage{times}
\usepackage{booktabs}
\usepackage{commath,stmaryrd}
\usepackage{wrapfig}

\setlength{\marginparwidth}{2.75cm}
\usepackage[textwidth=2.75cm,shadow,color=yellow]{todonotes}

\DeclareRobustCommand{\VAN}[3]{#2} 




\author{%
  \Name{Wojciech Kot{\l}owski} \Email{wkotlowski@cs.put.poznan.pl}\\
  \addr Pozna{\'n} University of Technology
  \AND
  \Name{Wouter M. Koolen} \Email{wmkoolen@cwi.nl}\\
  \addr Centrum Wiskunde \& Informatica
  \AND
  \Name{Alan Malek} \Email{malek@berkeley.edu}\\
  \addr University of California at Berkeley
}

\let\top\intercal
\newcommand{\adv}{i}
\newcommand{\hy}{\widehat{y}}
\newcommand{\reals}{\mathbb{R}}
\newcommand{\regret}{\mathrm{Reg}}
\newcommand{\isotonic}{\mathcal{F}}
\newcommand{\f}{\boldsymbol{f}}
\newcommand{\p}{\boldsymbol{p}}
\newcommand{\oomega}{\boldsymbol{\omega}}
\newcommand{\w}{\boldsymbol{w}}
\newcommand{\x}{\boldsymbol{x}}
\newcommand{\e}{\boldsymbol{e}}
\newcommand{\assert}[1]{\llbracket #1 \rrbracket}

\newcommand{\argmin}{\operatornamewithlimits{argmin}}
\newcommand{\argmax}{\operatornamewithlimits{argmax}}

\begin{document}

\maketitle

\makeatletter
\renewcommand{\@shortauthor}{Kot{\l}owski, Koolen and Malek}
\makeatother

\begin{abstract}
We consider the online version of the isotonic regression problem. Given a set of linearly ordered points (e.g., on the real line),
the learner must predict labels sequentially at adversarially chosen positions and is evaluated by her total squared loss compared against the best isotonic (non-decreasing) function in hindsight. We survey several standard online learning algorithms and show that none of them achieve the optimal regret exponent; in fact, most of them (including Online Gradient
Descent, Follow the Leader and Exponential Weights) incur linear regret. We then prove that the Exponential Weights algorithm played over a covering net of isotonic functions has a regret bounded by $O\big(T^{1/3} \log^{2/3}(T)\big)$ and present a matching $\Omega(T^{1/3})$ 
lower bound on regret. We provide a computationally efficient version of this algorithm. We also analyze the noise-free case, in which the revealed labels are isotonic, and show that the bound can be improved to $O(\log T)$ or even to $O(1)$ (when the labels are revealed in isotonic order). Finally, we extend the analysis beyond squared loss and give  bounds for entropic loss and absolute loss.
\end{abstract}

\begin{keywords}
online learning, isotonic regression, isotonic function, monotonic, nonparametric regression, exp-concave loss.
\end{keywords}

\section{Introduction}
We propose a problem of sequential prediction in the class of isotonic (non-decreasing) functions. At the start of the game, the learner is given a set of $T$ linearly ordered points (e.g., on the real line). Then, over the course of $T$ trials, the adversary picks a new (as of yet unlabeled) point and the learner predicts a label from $[0,1]$ for that point. Then, the true label
(also from $[0,1]$) is revealed, and the learner suffers the squared error loss. After $T$ rounds the learner is evaluated by means of the regret, which is its total squared loss minus the loss of the best isotonic function in hindsight.

Our problem is precisely the online version of \emph{isotonic regression}, a fundamental problem in statistics, which concerns fitting a sequence of data where the prediction is an isotonic function of the covariate \citep{Ayer55,Brunk55,isotonicbook}. Isotonic constraints arise naturally in many structured problems, e.g.\ predicting the height of children as a function of age,  autocorrelation functions, or biomedical applications such as estimating drug dose responses \citep{stylianou2002dose}. 
Despite being simple and commonly used in practice, isotonic regression is an example of nonparametric regression where the number of parameters 
 grows linearly with the number of data points. A natural question to ask is whether there are efficient, provably low regret algorithms for online isotonic regression. 

Since online isotonic regression concerns minimizing a convex loss function over the convex set of feasible prediction strategies (isotonic functions), it can be analyzed within the framework of online convex optimization \citep{ShalevShwartzBook}. We begin by surveying popular online learning algorithms in our setting and showing that most of them (including Online Gradient Descent, Follow the Leader and Exponential Weights) suffer regret that is linear in the number of data points in the worst case. The failure of most standard approaches makes the problem particularly interesting. We also show that the Exponentiated Gradient algorithm delivers a $O(\sqrt{T \log T})$ regret guarantee which is nontrivial but suboptimal.

We then propose an algorithm which achieves the regret bound $O\big(T^{1/3} \log^{2/3}(T)\big)$. The algorithm is a simple instance of Exponential Weights that plays on a covering net (discretization) of the class of isotonic functions. Despite the exponential size of the covering net, we present a computationally efficient implementation with $O(T^{4/3})$ time per trial. We also show a lower bound $\Omega(T^{1/3})$ on the regret of any algorithm, hence proving that the
proposed algorithm is optimal (up to a logarithmic factor). 

We also analyze the noise-free case where the labels revealed by the adversary are isotonic and therefore the loss of the best isotonic function is $0$. We show that the achievable worst-case regret in this case scales only logarithmically in $T$. If we additionally assume that the labels are queried in isotonic order (from left to right), the achievable worst-case regret drops to $1$. In both cases, we are able to determine the minimax algorithm and the actual value of the minimax regret.

Finally, we go beyond the squared loss and adapt our discretized
Exponential Weights algorithm to logarithmic loss and get the same regret guarantee. We also consider isotonic regression with absolute loss and show that the minimax regret is of order $\tilde{O}(\sqrt{T})$ and is achieved, up to a logarithmic factor, by the Exponentiated Gradient algorithm.


\subsection{Related work}

Isotonic regression has been extensively studied in statistics starting from work by \citet{Ayer55,Brunk55}.
The excellent book by \citet{isotonicbook} provides a history of the subject and numerous references to the statistical literature.
Isotonic regression has applications throughout statistics (e.g.\ nonparametric regression, estimating monotone densities, parameter estimation and statistical tests under order constraints, multidimensional scaling, see \citealt{isotonicbook}) and to more practical problems in biology, medicine, psychology, etc.\ \citep{kruskal64,stylianou2002dose,Obozinski,Luss2012}. 

The classical problem of minimizing an isotonic function under squared loss (the offline counterpart of this paper) has usually been studied in statistics under a generative model  $y_i = f(x_i) + \epsilon_i$ with $f(x_i)$ being some isotonic function and $\epsilon_i$ being random i.i.d.\ noise variables \citep{vanDeGeer90,BirgeMassart,zhang2002}. It is known \citep[see, e.g.,][]{zhang2002} that the statistical risk of the isotonic regression function $\mathbb{E}[\frac{1}{T}\|\widehat{f} - f\|^2]$ converges at the rate of $O(T^{-2/3})$, where $T$ is the sample size. Interestingly, this matches (up to a logarithmic factor) our results on online isotonic regression, showing that the online version of the problem is not fundamentally harder.

In machine learning, isotonic regression is used to calibrate class probability estimates \citep{Zadrozny2002,Niculescu-MizilC05,MenonJVEO12,Narasimhan2013,Valentina}, for ROC analysis \citep{Fawcett2007},
for learning Generalized Linear Models and Single Index Models \citep{Kalai2009,Kakade2011}, for data cleaning
\citep{KotSlo09ICML} and for ranking \citep{Moon2010}. Recent work by \citet{Kyng_etal15} proposes
fast algorithms under general partial order constraints. None of these works are directly related to the subject of this paper. The one related problem we found is online learning with logarithmic loss for the class of monotone predictors as studied by \citet{CesaBianchiLugosi2001}, who give an upper bound on the minimax regret (the bound is not tight for our case).

We also note that the problem considered here falls into a general framework of online nonparametric regression. \citet{RakhlinSridharan2014} give nonconstructive upper and lower bound on the minimax regret, but using their bounds for a particular function class requires upper and lower bounds on its sequential entropy. In turn, our upper bound is achieved by an efficient algorithm, while the lower bound follows from a simple construction. \citet{GaillardG15} propose an algorithm, called Chaining Exponentially Weighted Average Forecaster, that is based on aggregation on two levels. On the first level, a multi-variable version of Exponentiated Gradient is used, while on the second level, the Exponential Weights algorithm is used. The combined algorithm works for any totally bounded (in terms of metric entropy) set of functions, which includes our case. It is, however, computationally inefficient in general (an efficient adaptation of the algorithm is given for the H{\"o}lder class of functions, to which our class of isotonic functions does not belong). In contrast, we achieve the optimal bound by using a simple and efficient Exponential Weights algorithm on a properly discretized version of our function class (interestingly,
\citet{GaillardG15} show that a general upper bound for Exponential Weights, which works for any totally bounded nonparametric class, is suboptimal). 

\section{Problem statement}
\label{sec:problem_statement}

Let  $x_1 \le x_2 \le \ldots \le x_T$, be a set of $T$ linearly ordered points (e.g., on the real line), denoted by $X$. We call a function $f \colon X \to \reals$ \emph{isotonic} (order-preserving) on $X$ if $f(x_i) \le f(x_j)$ for any $x_i \le x_j$. Given data $(y_1,x_1),\ldots,(y_T,x_T)$, the isotonic regression problem is to find an isotonic $f$ that minimizes $\sum_{t=1}^T (y_t - f(x_t))^2$, and the optimal such function is called the \emph{isotonic regression function}.

We consider the online version of the isotonic regression problem. 
The adversary chooses $X=\{x_1,\ldots,x_T\}$ which is given in advance to the learner. In each trial $t=1,\ldots,T$, the adversary picks a yet unlabeled point $x_{i_t}$, $i_t \in \{1,\ldots,T\}$ and the learner predicts with $\hy_{\adv_t} \in [0,1]$. Then, the actual label $y_{\adv_t} \in [0,1]$ is revealed, and the learner is penalized by the squared loss $(y_{\adv_t} - \hy_{\adv_t})^2$. Thus, the learner predicts at all points $x_1, \ldots x_T$ but in an adversarial order.

The goal of the learner is to have small regret, which is defined to be the difference of the cumulative loss and the cumulative  loss of the best isotonic function in hindsight:
\[
\regret_T := \sum_{t=1}^T (y_{i_t} - \hy_{i_t})^2 - \min_{\mathrm{isotonic}~f}~ \sum_{t=1}^T (y_{i_t} - f(x_{i_t}))^2.
\]
Note that neither the labels nor the learner's predictions are required to be isotonic on $X$. In what follows, we assume without loss of generality that $x_1 < x_2 < \ldots < x_T$, because equal consecutive points $x_{i} = x_{i+1}$ constrain the adversary ($f(x_i) = f(x_{i+1})$ for any function $f$) but not the learner.

\paragraph{Fixed-design.} We now argue that without showing $X$ to the learner in advance, the problem is hopeless; if the adversary can choose  $x_{i_t}$ online, any learning algorithm will suffer regret at least $\frac{1}{4}T$ (a linear regret implies very little learning is happening since playing randomly obtains linear regret). To see this, assume the adversary chooses $x_{i_1}=0$; given learner's prediction $\hy_{i_1}$, the adversary can choose $y_{i_1}\in\{0,1\}$ to cause loss at least $\frac{1}{4}$. Now, after playing round $t$, the adversary chooses  $x_{i_{t+1}} = x_{i_t} - 2^{-t}$ if $y_{i_t}=1$ or $x_{i_{t+1}} = x_{i_t} + 2^{-t}$ if $y_{i_t}=0$. This allows the adversary to set $y_{i_{t+1}}$ to any value and still respect isotonicity. Regardless of $\hy_{i_{t+1}}$, the adversary inflicts loss at least $\frac{1}{4}$. This guarantees that if $y_{i_t}=1$ then $x_{i_q} < x_{i_t}$ for all future points $q=t+1,\ldots,T$; similarly, if $y_{i_t}=0$ then $x_{i_q} > x_{i_t}$ for all $q > t$. Hence, the label assignment is always isotonic on $X$, and the loss of the best isotonic function in hindsight is $0$ (by choosing $f(x_i) = y_i$, $i=1,\ldots,T$) while the total loss of the learner is at least $\frac{1}{4}T$.

\bigskip\noindent
Thus, the learner needs to know $X$ in advance. On the other hand, the particular values $x_i \in X$ do not play any role in this problem; it is only the order on $X$ that matters. Thus, we may without loss of generality assume that $x_i = i$ and represent isotonic functions by vectors $\f = (f_1,\ldots,f_T)$, where $f_i := f(i)$. We denote by $\isotonic$ the set of all $[0,1]$-valued isotonic functions:
\[
 \isotonic = \{\f = (f_1,\ldots,f_T) \colon 0 \le f_1 \le f_2 \le \ldots \le f_T \le 1\}.
\]
Using this notation, the protocol for online isotonic regression is presented in Figure~\ref{fig:online_protocol}.  

We will use $\widehat{L}_T = \sum_{t=1}^T (y_t - \hy_t)^2$ to denote the total loss of the algorithm
and $L_T(\f) = \sum_{t=1}^T (y_t - f_t)^2$ to denote the total loss of the isotonic function $\f \in \isotonic$.
The regret of the algorithm can then be concisely expressed as $\regret_T = \widehat{L}_T - \min_{\f \in \isotonic} L_T(\f)$.

\begin{figure}[t]
\begin{center}
\begin{tabular}{|l@{\hspace{10pt}}l|}
\hline
\multicolumn{2}{|l|}{At trial $t=1\dots T$:}\\
 &Adversary chooses index $i_t$, such that $i_t \notin \{i_1, \ldots, i_{t-1}\}$. \\
 &Learner predicts $\hy_{\adv_t} \in [0,1]$. \\
 &Adversary reveals label $y_{\adv_t} \in [0,1]$. \\
 &Learner suffers squared loss $(y_{\adv_t} - \hy_{\adv_t})^2$. \\
\hline
\end{tabular}
\end{center}
\caption{Online protocol for isotonic regression.}
\label{fig:online_protocol}
\end{figure}
\paragraph{The offline solution.} The classic solution to the isotonic regression problem is computed by the \emph{Pool Adjacent Violators Algorithm} (PAVA) \citep{Ayer55}. 
The algorithm is based on the observation that if the labels of any two consecutive points $i,i+1$ violate isotonicity, then we must have $f^*_i = f^*_{i+1}$ in the optimal solution and we may merge both points to their average. This process repeats and terminates in at most $T$ steps with the optimal solution. Efficient $O(T)$ time implementations exist \citep{deLeeuw_etal09}. There are two important properties of the  isotonic regression function $\f^*$ that we will need later \citep{isotonicbook}:
\begin{enumerate}
 \item The function $\f^*$ is piecewise constant and thus its level sets partition $\{1,\ldots,T\}$.
 \item The value of $\f^*$ on any level set is equal to the weighted average of labels within that set. 
\end{enumerate}

\section{Blooper reel}
\label{sec:blooper_reel}
The online isotonic regression problem concerns minimizing a convex loss function over the convex class of isotonic functions. Hence, the problem can be analyzed with online convex optimization tools \citep{ShalevShwartzBook}. Unfortunately, we find that  most of the common online learning algorithms completely fail on the isotonic regression problem in the sense of  giving linear regret guarantees or, at best, suboptimal rates of $O(\sqrt{T})$; see Table~\ref{tbl:algorithms}. We believe that the fact that most standard approaches fail makes the considered problem particularly interesting and challenging.

\begin{table}[t]
\begin{center}
\begin{tabular}{l r r}
\toprule
Algorithm & General bound & Bound for online IR \\
\midrule
Online GD & $G_2 D_2 \sqrt{T}$ & $T$ \\ 
EG & $G_{\infty} D_1 \sqrt{T \log d}$ & $\sqrt{T \log T}$ \\
FTL & $G_2 D_2 d \log T$  & $T^2 \log T$ \\
Exponential Weights & $d \log T$ & $T \log T$ \\
\bottomrule
\end{tabular}
\end{center}
\caption{Comparison of general bounds as well as bounds specialized to online isotonic regression for various standard online learning algorithms. For general bounds, $d$ denotes the
dimension of the parameter vector 
(equal to $T$ for this problem), 
$G_p$
is the bound on the $L_p$-norm of the loss gradient, and 
$D_q$
is the bound on the $L_q$-norm of the parameter vector. Bounds for 
FTL and Exponential Weights exploit the fact that the square loss is $\frac{1}{2}$-exp-concave  \citep{book}. 
}
\label{tbl:algorithms}
\end{table}

In the usual formulation of online convex optimization, for trials  $t=1,\ldots,T$, the learner predicts with a parameter vector $\w_t \in \reals^d$, the adversary reveals a convex loss function $\ell_t$, and the learner suffers loss $\ell_t(\w_t)$. To cast our problem in this framework, we set the prediction of the learner at trial $t$ to $\hy_{i_t} = \w_t^\top \x_{i_t}$ and the loss to $\ell_t(\w_t) = (y_{i_t} - \w_t^\top \x_{i_t})^2$. There are two natural ways to parameterize $\w_t, \x_{i_t} \in \reals^d$:
\begin{enumerate}
\item  The learner predicts some $\f \in \isotonic$ and sets $\w = \f$. Then, $\x_{i}$ is the $i$-th unit vector (with $i$-th coordinate equal to $1$ and the remaining coordinates equal to $0$). Note that $\sup_{\w} \|\w\|_2 = \sqrt{T}$ and $\|\nabla \ell(\w)\|_2 \leq 2$ in this parameterization.
 \item The learner predicts some $\f\in\isotonic$ and sets $\w=(f_1-f_0,f_2-f_1,\ldots,f_{T+1}-f_T) \in \reals^{T+1}$, i.e.\ the vector of differences of  $\f$ (we used two dummy variables $f_0 = 0$ and $f_{T+1}=1$); then, $\x_i$ has the first $i$ coordinates equal to $1$ and the last $T-i$ coordinates equal to $0$. Note that $\|\w\|_1 = 1$, $\|\nabla \ell(\w)\|_{\infty} \leq 2$, but $\sup_{y,\w} \|\nabla \ell(\w)\|_2 = 2\sqrt{T}$.
\end{enumerate}

Table~\ref{tbl:algorithms} lists the general bounds and their specialization to online isotonic regression for several standard online learning algorithms: Online Gradient Descent (GD) \citep{Zinkevich}, Exponentiated Gradient (EG) \cite{eg} when applied to exp-concave losses (which include squared loss, see \citealt{book}), Follow the Leader\footnote{The Online Newton algorithm introduced by \citet{logarithmic} is equivalent to FTL for squared loss.}, and Exponential Weights \citep{logarithmic}. EG is assumed to be used in the second parameterization, while the bounds for the remaining algorithms apply to both parameterizations (since $G_2 D_2 = \Omega(\sqrt{T})$ in both cases).

EG is the only algorithm that provides a meaningful bound of order $O(\sqrt{T \log T})$, as shown in Appendix~\ref{sec:appendix_EG_bound}. All the other bounds are vacuous (linear in $T$ or worse). This fact does not completely rule out these algorithms since we do not know a priori whether their bounds are tight in the worst case for isotonic regression. Next we will exhibit sequences of outcomes that cause GD, FTL and Exponential Weights to incur linear regret.

\begin{theorem}
For any learning rate $\eta \geq 0$ and any initial parameter vector $\f_1 \in \isotonic$, 
the Online Gradient Descent algorithm, defined as
\[
 \f_t = \argmin_{\f \in \isotonic} \left\{\frac{1}{2} \| \f - \f_{t-1} \|^2 +  2\eta (f_{t-1,i_{t-1}} - y_{i_{t-1}})f_{i_{t-1}}  \right\},
\]
suffers at least $\frac{T}{4}$ regret in the worst case.
\label{thm:gd}
\end{theorem}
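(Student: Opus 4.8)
The plan is to hand the adversary two oblivious strategies and let it commit to whichever is worse for the learner; since Online Gradient Descent is deterministic given $\eta$ and $\f_1$, no further adaptivity is needed and the final choice will depend only on $\f_1$. It is convenient to first rewrite a single step of the algorithm as a projected gradient step: the objective defining $\f_t$ is, up to an additive constant, $\frac12\|\f - (\f_{t-1} - \eta\nabla_{t-1})\|^2$ with $\nabla_{t-1} = 2(f_{t-1,i_{t-1}} - y_{i_{t-1}})\e_{i_{t-1}}$, so $\f_t$ is the Euclidean projection onto $\isotonic$ of $\f_{t-1}$ after a single coordinate is moved towards (or, when $\eta>\frac12$, past) the last observed label. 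Recall also that this projection is the isotonic regression of its input and is computed by PAVA.

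\emph{Strategy $0$:} query the points left to right, $i_t = t$, and always reveal $y_t = 0$. Then the constant-zero function lies in $\isotonic$ and has loss $0$, so $\regret_T$ equals the learner's cumulative loss $\sum_{t=1}^T (f_{t,t})^2$. The crux is the claim, proved by induction on $t$, that the freshly queried coordinate is never disturbed before it is used, i.e.\ $f_{t,t} = f_{1,t}$ for all $t$ — equivalently, after round $t$ the coordinates $t+1,\dots,T$ of $\f_{t+1}$ still equal $f_{1,t+1},\dots,f_{1,T}$. Indeed, the gradient step at round $t$ only \emph{decreases} coordinate $t$ (from $f_{1,t}$ to $f_{1,t}(1-2\eta)$, with the box constraint clipping it at $0$ when $\eta>\frac12$), so the vector fed to the projection is non-decreasing except possibly at the junction $(t-1,t)$ and at the lower box constraint on coordinate $t$; PAVA therefore only re-pools coordinates $\le t$, the pooled value stays $\le f_{1,t}\le f_{1,t+1}$, and coordinates $t+1,\dots,T$ are left untouched. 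Hence $\regret_T \ge \sum_{t=1}^T f_{1,t}^2$. \emph{Strategy $1$} is the mirror image: query right to left, $i_t = T-t+1$, always reveal $y_t = 1$, and compare against the constant-one function; now the gradient step only \emph{increases} the queried coordinate, so PAVA only re-pools coordinates to its right, the symmetric induction gives $f_{t,T-t+1} = f_{1,T-t+1}$, and $\regret_T \ge \sum_{t=1}^T (1-f_{1,t})^2$.

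The adversary now picks the strategy with the larger bound, and since $x^2 + (1-x)^2\ge\frac12$ for all $x\in[0,1]$,
\[
\max\Big(\sum_{t=1}^T f_{1,t}^2,\ \sum_{t=1}^T(1-f_{1,t})^2\Big)\ \ge\ \frac12\sum_{t=1}^T\big(f_{1,t}^2 + (1-f_{1,t})^2\big)\ \ge\ \frac T4 ,
\]
which is the claimed bound. The one delicate point is the inductive invariant — that the gradient-then-project dynamics, always fed queries at not-yet-seen coordinates, freeze the remaining coordinates at their initial values. This reduces to a structural fact about isotonic regression: projecting a vector that is monotone except for one coordinate perturbed downward (resp.\ upward) changes only the coordinates to the left (resp.\ right) of it, which one sees by decoupling the projection problem across that coordinate. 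Everything else is routine bookkeeping plus the displayed one-line convexity inequality.
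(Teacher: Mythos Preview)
Your proof is correct and follows essentially the same route as the paper's: the same two oblivious adversary strategies (left-to-right with all-zero labels, right-to-left with all-one labels), the same inductive invariant that the projected-gradient update never touches the not-yet-queried coordinates (so the prediction at time $t$ equals the initial value $f_{1,t}$), and the same finishing step via $x^2+(1-x)^2\ge\tfrac12$. Your presentation is slightly more explicit about why PAVA cannot pool past the freshly moved coordinate, but the underlying argument is identical.
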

\begin{proof}
The adversary reveals the labels in isotonic order ($i_t = t$ for all $t$), and all the labels are zero. Then, $\ell_t(\f_t) = \ell_t(\f_1)$, and the total loss of the algorithm $\widehat{L}_T$ is equal to the loss of the initial parameter vector: $\widehat{L}_T = L_T(\f_1) = \sum_t f_{1,t}^2$.
This follows from the fact that $\f_t$ and $\f_{t-1}$ can only differ on the first $t-1$ coordinates ($f_{t,q} = f_{t-1,q}$ for $q \geq t$) so only the coordinates of the already labeled points are updated. To see this, note that the parameter update can be decomposed into the ``descent'' step $\widetilde{\f}_{t} = \f_{t-1} - 2\eta f_{t-1,t-1} \e_{t-1}$ (where $\e_i$ is the $i$-th unit vector), and the ``projection'' step $\f_t = \argmin_{\f \in \isotonic} \|\f - \widetilde{\f}_{t}\|^2$ (which is actually the isotonic regression problem). 
The descent step decreases $(t-1)$-th coordinate by some amount and leaves the remaining coordinates intact. Since $\f_{t-1}$ is isotonic, $\widetilde{f}_{t,t} \leq \ldots \leq \widetilde{f}_{t,T}$ and $\widetilde{f}_{t,q} \leq \widetilde{f}_{t,t}$ for all $q < t$. Hence, the projection step will only affect the first $t-1$ coordinates.

By symmetry, one can show that when the adversary reveals the labels in \emph{antitonic} order ($i_t = T-t+1$ for all $t$), and all the labels are $1$, then $\widehat{L}_T = \sum_t (1-f_{1,t})^2$. Since $f_{1,t}^2 + (1-f_{1,t})^2 \geq \frac{1}{2}$ for any $f_{1,t}$, the loss suffered by the algorithm on one of these sequences is at least $\frac{T}{4}$.
%
\end{proof}

\vspace*{-10pt}

\begin{theorem}
For any regularization parameter $\lambda > 0$ and any regularization center $\f_0 \in \isotonic$, 
the Follow the (Regularized) Leader algorithm defined as:
\[
 \f_t = \argmin_{\f \in \isotonic} \Big\{ \lambda \| \f - \f_0 \|^2 + \sum_{q=1}^{t-1} (f_{i_q} - y_{i_q})^2 \Big\},
\]
suffers at least $\frac{T}{4}$ regret in the worst case.
\label{thm:ftrl}
\end{theorem}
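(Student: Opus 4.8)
The plan is to reuse the template of the proof of Theorem~\ref{thm:gd}: I would exhibit two adversary sequences, on each of which the best isotonic function has loss $0$, and argue that on at least one of them the algorithm's total loss (hence its regret) is at least $\frac{T}{4}$. Take sequence~(1) to reveal the points left to right, $i_t = t$, with all labels equal to $0$, and sequence~(2) to reveal them right to left, $i_t = T-t+1$, with all labels equal to $1$. On sequence~(1) the constant function $(0,\ldots,0) \in \isotonic$ fits perfectly, and on sequence~(2) the constant function $(1,\ldots,1) \in \isotonic$ fits perfectly, so in both cases $\regret_T$ equals the algorithm's cumulative loss $\widehat{L}_T$.

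The crux is to show that on sequence~(1) the Follow-the-Regularized-Leader iterate satisfies $f_{t,t} = f_{0,t}$ at every round $t$: at a point that has not yet been queried, the prediction is pinned to the value of the regularization center there. To prove this I would rewrite the objective defining $\f_t$ as a weighted isotonic regression, $\f_t = \argmin_{\f \in \isotonic} \sum_{q=1}^T (\lambda + w_q)(f_q - c_q)^2$, where $w_q \in \{0,1\}$ indicates whether $q$ has been queried and $c_q = \frac{\lambda f_{0,q} + w_q y_q}{\lambda + w_q}$ is the matching target; for sequence~(1) at round $t$ this gives $c_q = \frac{\lambda}{\lambda+1} f_{0,q} \le f_{0,q}$ for $q < t$ and $c_q = f_{0,q}$ for $q \ge t$. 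Two observations then finish it. First, $\f_0$ is its own weighted isotonic regression (it is already isotonic), and since $c \le \f_0$ coordinatewise, monotonicity of isotonic regression yields $f_{t,q} \le f_{0,q}$ for all $q$. Second, form $\f'$ from $\f_t$ by replacing each coordinate $q \ge t$ with $f_{0,q}$; the inequality just obtained gives $f'_{t-1} = f_{t,t-1} \le f_{0,t-1} \le f_{0,t} = f'_t$, so $\f' \in \isotonic$, while $\f'$ zeroes the squared-error terms for $q \ge t$ and agrees with $\f_t$ for $q < t$, so its objective value does not exceed that of $\f_t$. By strict convexity the minimizer is unique, hence $\f' = \f_t$, and in particular $f_{t,t} = f_{0,t}$. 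The claim for sequence~(2) follows by symmetry: the order-reversing complement $\f \mapsto (1-f_T,\ldots,1-f_1)$ maps $\isotonic$ onto itself, preserves squared losses, and swaps $y \leftrightarrow 1-y$, turning sequence~(2) into sequence~(1) with center $(1-f_{0,T},\ldots,1-f_{0,1})$; unwinding the map gives $f_{t,T-t+1} = f_{0,T-t+1}$.

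Granting the claim, the algorithm's loss on sequence~(1) is $\sum_{t=1}^T f_{0,t}^2$ and on sequence~(2) is $\sum_{t=1}^T (1-f_{0,T-t+1})^2 = \sum_{t=1}^T (1-f_{0,t})^2$; since $a^2 + (1-a)^2 \ge \frac{1}{2}$ for every $a \in \reals$, these two losses sum to at least $\frac{T}{2}$, so the algorithm suffers regret at least $\frac{T}{4}$ on one of the sequences. The main obstacle is precisely the pinning claim, i.e.\ controlling how the pooling forced by the isotonic constraints interacts with the quadratic regularizer at unqueried coordinates; once the fresh-point prediction is identified with the regularization center, the remainder is the same bookkeeping as in the Online Gradient Descent proof. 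I note that the argument genuinely needs $\lambda > 0$: with $\lambda = 0$ a fresh point would be interpolated between the neighbouring queried labels and the construction collapses, which is consistent with the theorem's hypothesis.
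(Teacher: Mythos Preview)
Your proof is correct and follows the same template as the paper: the same two adversary sequences, the same pinning claim $f_{t,t}=f_{0,t}$, and the same $a^2+(1-a)^2\ge\tfrac12$ averaging at the end. The only difference is in how you establish the pinning claim. The paper dispatches it in one line: on sequence~(1) the \emph{unconstrained} minimizer over $\reals^T$ of $\lambda\|\f-\f_0\|^2+\sum_{q<t}f_q^2$ has coordinates $\frac{\lambda}{\lambda+1}f_{0,q}$ for $q<t$ and $f_{0,q}$ for $q\ge t$, which is already isotonic (a nondecreasing sequence with its first few entries scaled down remains nondecreasing), so the isotonic constraints are inactive and $f_{t,t}=f_{0,t}$ immediately. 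Your route via the monotonicity of weighted isotonic regression plus a replacement argument is valid but heavier machinery than needed; it does, however, highlight a structural property (the FTRL iterate is coordinatewise dominated by $\f_0$) that the paper's argument also yields implicitly.
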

\begin{proof}
The proof uses exactly the same arguments as the proof of Theorem \ref{thm:gd}: If the adversary reveals labels equal to $0$ in isotonic order, or labels equal to $1$ in antitonic order, then $f_{t,t} = f_{0,t}$ for all $t$. This is because the constraints in the minimization problem are never active ($\argmin$ over $\f \in \reals^T$ returns an isotonic function).
\end{proof}

\vspace*{-10pt}

We used a regularized version of FTL in Theorem \ref{thm:ftrl}, because otherwise FTL does not give unique predictions for unlabeled points.

\begin{theorem}
The Exponential Weights algorithm defined as:
\[
 \f_t = \int_{\isotonic}  \f p_t(\f) \dif \mu(\f), \qquad \text{where} \quad p_t(\f) = \frac{e^{-\frac{1}{2} \sum_{q=1}^{t-1} (f_{i_q} - y_{i_q})^2}}{\int_{\isotonic} e^{-\frac{1}{2} \sum_{q=1}^{t-1} (f_{i_q} - y_{i_q})^2} \dif \mu(\f)},
\]
with $\mu$ being the uniform (Lebesgue) measure over $\isotonic$, suffers regret $\Omega(T)$ in the worst case.
\label{thm:exp_weights_uniform}
\end{theorem}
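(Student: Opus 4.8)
The plan is to exhibit a single outcome sequence forcing $\Omega(T)$ regret, in the same spirit as Theorems~\ref{thm:gd} and~\ref{thm:ftrl}. Let the adversary reveal the labels in isotonic order, $i_t = t$, with every label equal to $0$. Then $\f = \mathbf 0 \in \isotonic$ fits perfectly, so $\min_{\f\in\isotonic} L_T(\f)=0$ and $\regret_T = \widehat L_T = \sum_{t=1}^T \hy_t^2$. Since the prediction $\hy_{i_t}$ is the $i_t$-th coordinate of $\f_t = \int_\isotonic \f\, p_t(\f)\,\dif\mu(\f)$, under this sequence we have $\hy_t = \mathbb E_{p_t}[f_t]$ where $p_t$ has density $\propto \assert{\f\in\isotonic}\prod_{q=1}^{t-1}e^{-\frac12 f_q^2}$ with respect to $\mu$. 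Hence it suffices to show that $\hy_t$ is bounded below by an absolute constant on a constant fraction of the rounds.

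The one computation needed is the marginal density of $f_t$ under $p_t$. Integrating out the remaining coordinates with $f_t = u$ fixed, the block $0 \le f_1 \le \dots \le f_{t-1} \le u$ contributes $\frac{1}{(t-1)!}\phi(u)^{t-1}$ with $\phi(u) := \int_0^u e^{-g^2/2}\,\dif g$, and the block $u \le f_{t+1} \le \dots \le f_T \le 1$ contributes $\frac{(1-u)^{T-t}}{(T-t)!}$, so the marginal density of $f_t$ is proportional to $\phi(u)^{t-1}(1-u)^{T-t}$ on $[0,1]$. The structural fact that makes everything work is the two-sided estimate $e^{-1/2}u \le \phi(u)\le u$ for $u\in[0,1]$: the tilt from the observed zero labels reweights the relevant mass by at most a bounded factor per observation, so it cannot drive the marginal of $f_t$ toward $0$ too aggressively.

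I would then estimate, for $t$ in the second half $\{\lceil T/2\rceil,\dots,T\}$ and a small absolute constant $\tau>0$, the probability $\Pr_{p_t}[f_t<\tau]$. Bounding the numerator $\int_0^\tau \phi(u)^{t-1}(1-u)^{T-t}\,\dif u \le \tau^t/t$ via $\phi(u)\le u$, and the denominator $\int_0^1 \phi(u)^{t-1}(1-u)^{T-t}\,\dif u \ge \frac14 (e^{-1/2}/4)^{t-1} 2^{-(T-t)}$ by restricting to $u\in[\frac14,\frac12]$ and using $\phi(u)\ge e^{-1/2}u\ge e^{-1/2}/4$, the ratio is $\frac{4\tau}{t}\,(4e^{1/2}\tau)^{t-1}2^{T-t}$. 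On $t\ge T/2$, choosing $\tau$ small enough that $4e^{1/2}\tau<1$ makes this expression decay exponentially in $T$; so for all large $T$, $\Pr_{p_t}[f_t<\tau]\le\frac12$, giving $\hy_t = \mathbb E_{p_t}[f_t] \ge \tau\,\Pr_{p_t}[f_t\ge\tau]\ge\tau/2$, and therefore $\widehat L_T \ge \sum_{t=\lceil T/2\rceil}^T \hy_t^2 \ge \frac{T}{2}\cdot\frac{\tau^2}{4} = \Omega(T)$. The bound is trivial for small $T$.

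The step I expect to be the main obstacle is precisely the control of the exponential tilt: one must rule out that after $t-1$ zero observations on the left the posterior mean of $f_t$ collapses to $o(1)$. The lower bound $\phi(u)\ge e^{-1/2}u$ is what prevents this, since each zero label costs only a constant multiplicative factor on the mass in any fixed region, which is negligible when weighed against the comparably small Beta-type normalization restricted to a sub-interval bounded away from $0$; everything else in the argument is elementary.
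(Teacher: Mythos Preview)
Your adversarial sequence and marginal computation match the paper's exactly: reveal all labels $0$ in isotonic order, so the marginal of $f_t$ under $p_t$ has density proportional to $\phi(u)^{t-1}(1-u)^{T-t}$ with $\phi(u)=\int_0^u e^{-g^2/2}\,\dif g$. Where you and the paper diverge is in the final step showing $\hy_t$ is bounded away from zero on the second half of the rounds.

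The paper argues via log-concavity: it observes that the marginal density is log-concave, locates its mode $z^*\in(\alpha/2,\alpha)$ for $\alpha=(t-1)/(T-1)$, and invokes a structural result for one-dimensional log-concave densities (the density at the mean is at least $e^{-1}/\sqrt3$ times the density at the mode). Combined with a first-order concavity bound on the log-density at $\alpha/2$, this yields $\hy_t\ge\alpha/4$ whenever $\alpha\ge\tfrac12$ and $T\ge14$, hence $\regret_T\ge T/128$.

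Your route is more elementary: you bound $\Pr_{p_t}[f_t<\tau]$ directly by a ratio of crude integrals, using only the sandwich $e^{-1/2}u\le\phi(u)\le u$ on $[0,1]$. This sidesteps the log-concavity machinery entirely, at the price of a worse (and unspecified) constant. One small correction: the stated condition $4e^{1/2}\tau<1$ is not quite sufficient, because at $t=\lceil T/2\rceil$ the factor $2^{T-t}\approx 2^{T/2}$ must also be absorbed; the ratio bound at that endpoint is of order $(8e^{1/2}\tau)^{T/2}$, so you actually need $\tau<1/(8e^{1/2})$. With that adjustment your argument is complete.
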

The proof of Theorem \ref{thm:exp_weights_uniform} is long and is deferred to Appendix \ref{sec:appendix_Exp_Weights_proof}.

\section{Optimal algorithm}
\label{sec:optimal_algorithm}
We have hopefully provided a convincing case that many of the standard online approaches do not work for online isotonic regression.
Is this section, we present an algorithm that does: Exponential Weights over a discretized version of $\mathcal F$. We show that it achieves $O(T^{1/3} (\log T)^{2/3})$ regret which matches, up to log factors, the $\Omega(T^{1/3})$ lower bound we prove in the next section.

The basic idea is to form a covering net of all isotonic functions by discretizing $\mathcal F$ with resolution $\frac{1}{K}$, to then play Exponential Weights on this covering net with a uniform prior, and to tune $K$ to get the best bound. We take as our covering net $\mathcal{F}_K \subset \mathcal{F}$ the set of isotonic functions which take values of the form $\frac{k}{K}$, $k=0,\ldots,K$, i.e.
\[
\mathcal{F}_K := \left\{\f \in \mathcal{F} \colon f_t = \frac{k_t}{K} \; \text{~for~some~} k_t \in \{0,\ldots,K\},\, k_1 \leq \ldots \leq k_T \right\}.
\]
Note that $\mathcal{F}_K$ is finite. In fact $|\mathcal F_K| = \binom{T+K}{K}$, since the enumeration of all isotonic function in $\mathcal F_K$ is equal to the number of ways to distribute the $K$ possible increments among bins $[0,1),\ldots,[T-1,T),[T,T+1)$. The first and last bin are to allow for isotonic functions that start and end at arbitrary values. It is a well known fact from combinatorics that there are $\binom{T+K}{K}$ ways to allocate $K$ items into $T+1$ bins, \citep[see, e.g.,][section 2.4]{detemple2014combinatorial}.

The algorithm we propose is the Exponential Weights algorithm over this covering net; at round $t$, each $\f$ in $\mathcal F_K$ is given weight $e^{-\frac{1}{2} \sum_{q=1}^{t-1} (f_{i_q} - y_{i_q})^2}$ and we play the weighted average of $f_{i_t}$. An efficient implementation is given in Algorithm~\ref{alg:efficient}.

\begin{theorem}
Using $K=\left\lceil\left(\frac{T}{4 \log(T+1)}\right)^{1/3}\right\rceil$, the regret of Exponential Weights with the uniform prior on the covering net $\mathcal F_K$ has regret bounded by:
  \begin{equation*}
 \regret_T\leq \frac{3}{2^{2/3}}T^{1/3} \left(\log(T+1)\right)^{2/3} +2 \log(T+1).
  \end{equation*}
\end{theorem}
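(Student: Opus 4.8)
The plan is to split the regret into an \emph{estimation} part, handled by a standard finite-class exp-concave bound, and an \emph{approximation} part, the error from restricting to the covering net $\mathcal F_K$, and then optimize over $K$. Since the square loss is $\frac12$-exp-concave on $[0,1]$ (see the caption of Table~\ref{tbl:algorithms}) and the algorithm described above is the exponentially weighted average forecaster with learning rate $\eta=\frac12$, uniform prior over the finite set $\mathcal F_K$, and prediction $\hy_{i_t}=\sum_{\f\in\mathcal F_K}p_t(\f)f_{i_t}$, I would first invoke the classical bound: exp-concavity gives $e^{-\eta\ell_t(\hy_{i_t})}\ge\sum_{\f}p_t(\f)e^{-\eta\ell_t(f_{i_t})}$ by Jensen, and telescoping the normalizers of $p_t$ yields
\[
\widehat L_T \;\le\; \min_{\f\in\mathcal F_K} L_T(\f) + \frac1\eta\log|\mathcal F_K| \;=\; \min_{\f\in\mathcal F_K} L_T(\f) + 2\log\binom{T+K}{K}.
\]
Since a size-$K$ multiset over $T+1$ bins is the image of some length-$K$ sequence drawn from those bins, $\binom{T+K}{K}\le(T+1)^K$, so $2\log|\mathcal F_K|\le 2K\log(T+1)$.

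The more delicate step is bounding $\min_{\f\in\mathcal F_K}L_T(\f)-L_T(\f^*)$, where $\f^*$ is the isotonic regression function. I would take $\tilde\f\in\mathcal F_K$ to be $\f^*$ with every coordinate rounded to the nearest multiple of $\frac1K$; this map is monotone, so $\tilde\f\in\mathcal F_K$, and $|f^*_t-\tilde f_t|\le\frac1{2K}$. Expanding with $a^2-b^2=(a-b)(a+b)$ (taking $a=y_t-\tilde f_t$, $b=y_t-f^*_t$),
\[
L_T(\tilde\f)-L_T(\f^*) \;=\; 2\sum_{t=1}^T (f^*_t-\tilde f_t)(y_t-f^*_t) \;+\; \sum_{t=1}^T (f^*_t-\tilde f_t)^2 .
\]
Here I would use the two structural properties of $\f^*$ recalled in Section~\ref{sec:problem_statement}: $\f^*$ is constant on each level set $S$, and its value there equals the average of the labels in $S$, so $\sum_{t\in S}(y_t-f^*_t)=0$. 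Since $\tilde\f$ is also constant on each such $S$, the first sum breaks up over level sets and vanishes term by term, leaving $L_T(\tilde\f)-L_T(\f^*)\le\sum_t(f^*_t-\tilde f_t)^2\le\frac{T}{4K^2}$.

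Combining the two displays, $\regret_T=\widehat L_T-L_T(\f^*)\le\frac{T}{4K^2}+2K\log(T+1)$. Substituting $K=\lceil(T/(4\log(T+1)))^{1/3}\rceil$ and writing $K_0=(T/(4\log(T+1)))^{1/3}$ so that $K_0\le K\le K_0+1$, the first term is at most $\frac{T}{4K_0^2}=2^{-2/3}T^{1/3}(\log(T+1))^{2/3}$ and the second is at most $2K_0\log(T+1)+2\log(T+1)=2\cdot 2^{-2/3}T^{1/3}(\log(T+1))^{2/3}+2\log(T+1)$; adding them gives exactly $\frac{3}{2^{2/3}}T^{1/3}(\log(T+1))^{2/3}+2\log(T+1)$.

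The hard part is the approximation step, and specifically getting the rounding error down to $O(T/K^2)$: the obvious Lipschitz estimate only gives $|L_T(\tilde\f)-L_T(\f^*)|\le 2\sum_t|f^*_t-\tilde f_t|=O(T/K)$, which would produce only $\regret_T=O(\sqrt{T\log T})$, no better than Exponentiated Gradient. The quadratic gain responsible for the $T^{1/3}$ rate comes entirely from the cancellation of the first-order term $\sum_t(f^*_t-\tilde f_t)(y_t-f^*_t)$, which in turn is exactly the statement that $\f^*$ is piecewise constant with value equal to the label average on each level set. One should also keep in mind throughout that $\hy_{i_t},y_{i_t}\in[0,1]$, which is what makes the square loss $\frac12$-exp-concave and keeps the rounded function inside $\mathcal F$.
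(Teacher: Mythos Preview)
Your proposal is correct and follows essentially the same approach as the paper: the same estimation/approximation split, the same exp-concave bound $2\log|\mathcal F_K|$ with $\binom{T+K}{K}\le(T+1)^K$, the same rounding of $\f^*$ to the $\frac1K$-grid, and the same use of the level-set averaging property of $\f^*$ to kill the first-order term and obtain $L_T(\tilde\f)-L_T(\f^*)\le T/(4K^2)$. The only cosmetic difference is that the paper writes the per-term difference as $(f^+_t-f^*_t)(f^+_t+f^*_t-2y_t)$ and simplifies within each level set to $(c^+-c)^2$, whereas you separate the cross term $2\sum_t(f^*_t-\tilde f_t)(y_t-f^*_t)$ up front and observe it vanishes level-set by level-set; these are algebraically equivalent.
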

\label{thm:exp_weights_discretization}
\begin{proof}
Due to exp-concavity of the squared loss, running Exponential Weights with $\eta = 1/2$ guarantees that: 
\[
\widehat{L}_T - \min_{\f \in \mathcal{F}_K} L_T(\f) \leq \frac{\log |\mathcal F_K|}{\eta} = 2 \log |\mathcal F_K|,
\]
\citep[see, e.g.,][Proposition 3.1]{book}. 

Let $\f^* = \argmin_{\f \in \mathcal{F}} L_T(\f)$ be the isotonic regression function. 
The regret is 
\begin{align*}
\mathrm{Reg} &= \widehat{L}_T - L_T(\f^*) \\
&= \widehat{L}_T - \min_{\f \in \mathcal{F}_K} L_T(\f)
 + \underbrace{\min_{\f \in \mathcal{F}_K} L_T(\f) - L_T(\f^*)}_{:=\Delta_K}.
\end{align*}
Let us start with bounding $\Delta_K$. Let $\f^+$ be a function obtained from $\f^*$ by rounding each value $f^*_t$ to the nearest number of the form $\frac{k_t}{K}$ for some $k_t \in \{0,\ldots,K\}$. It follows that $\f^+ \in \mathcal{F}_K$ and $\Delta_K\leq L_T(\f^+) - L_T(\f^*)$.  Using $\ell_t(x):=(y_t-x)^2$, we have
\begin{align}
\label{eq:bound_f_plus_f_star}
\ell_t(f^+_t) - \ell_t(f^*_t) &= (y_t - f^+_t)^2 - (y_t - f^*_t)^2 
= (f^+_t - f^*_t)(f^+_t + f^*_t - 2y_t). 
\end{align}
Let $\mathcal{T}_c = \{t \colon f^*_t = c\}$ be the level set of the isotonic regression function. It is known \citep[see also Section \ref{sec:problem_statement}]{isotonicbook} that (as long as $|\mathcal{T}_c| > 0$):
\begin{equation}
\label{eq:average_property_isotonic_regression}
\frac{1}{|\mathcal{T}_c|} \sum_{t \in \mathcal{T}_c} y_t = f^*_t = c,
\end{equation}
i.e., the isotonic regression function is equal to the average over all labels within each level set. Now, choose any level set $\mathcal{T}_c$ with $|\mathcal{T}_c| > 0$. Note that $\f^+$ is also constant on $\mathcal{T}_c$ and denote its value by $c^+$. Summing \eqref{eq:bound_f_plus_f_star} over $\mathcal{T}_c$ gives:
\begin{align*}
\sum_{t \in \mathcal{T}_c} \ell_t(f^+_t) - \ell_t(f^*_t) 
&= \sum_{t \in \mathcal{T}_c} (c^+ - c) (c^+ + c - 2y_t) \\
&= |\mathcal{T}_c| (c^+ - c) (c^+ + c) - 2 (c^+ - c) \sum_{t \in \mathcal{T}_c} y_t \\
(\text{from \eqref{eq:average_property_isotonic_regression}})\qquad &= |\mathcal{T}_c| (c^+ - c) (c^+ + c) - 2 |\mathcal{T}_c| (c^+ - c) c \\
&= |\mathcal{T}_c| (c^+ - c)^2 \\
&= \sum_{t \in \mathcal{T}_c} (f_t^+ - f_t^*)^2.
\end{align*}
Since for any $t$, $|f^+_t - f^*_t| \leq \frac{1}{2K}$, we can sum over the level sets  of $\f^*$ to bound $\Delta_K$:
\[
\Delta_K \leq L_T(\f^+) - L_T(\f^*) 
\leq
\sum_{t=1}^T \ell_t(f_t^+) - \ell_t(f_f^*)
= 
\sum_{t=1}^T (f_t^+ - f_t^*)^2
\leq
 \frac{T}{4K^2}.
\]
Combining these two bounds, we get:
\begin{equation*}
  \regret_T \leq 2 \log |\isotonic_K| + \frac{T}{4K^2} \leq 2K\log(T+1) + \frac{T}{4K^2},
\end{equation*}
where we used $|\isotonic_K| = \binom{T+K}{K} \leq (T+1)^K$.\footnote{$\binom{T+K}{K} = \frac{(T+1) \cdot \ldots (T+K)}{1 \cdot \ldots \cdot K}$; we get the bound by noticing that $\frac{T+k}{k} \leq T+1$ for $k \ge 1$.} Optimizing the bound over $K$ by setting the derivative to $0$ gives $K^* = \left(\frac{T}{4 \log(T+1)}\right)^{1/3}$. Taking $K = \lceil K^* \rceil$ and plugging it in into the bound gives:
\[
 \regret_T ~\leq~ 2 (K^* + 1) \log(T+1) + \frac{T}{4 (K^*)^2} ~\leq~ \frac{3}{2^{2/3}}T^{1/3} \left(\log(T+1)\right)^{2/3} +2 \log(T+1),
\]
where we used $K^* \leq K \leq K^* + 1$.
\end{proof}

We note that instead of predicting with weighted average over the discretized functions, one can make use of the fact that the squared loss is $2$-mixable and apply the prediction rule of the Aggregating Forecaster \citep[Section 3.6]{Vovk90,book}. This would let us run the algorithm with $\eta=2$ and improve the leading constant in the regret bound to $\frac{3}{4}$.

\paragraph{The importance of being discrete.}
Surprisingly, playing weighted averages over $\mathcal F$ does not work (Theorem~\ref{thm:exp_weights_uniform}), but playing over a covering net does. Indeed, the uniform prior exhibits wild behavior by concentrating all mass around the ``diagonal'' monotonic function with constant slope $1/T$, whereas the discretized version with the suggested tuning for $K$ still has non-negligible mass everywhere.

\paragraph{Comparison with online nonparametric regression.} We compare our approach to the work of \citet{RakhlinSridharan2014} and \citet{GaillardG15}, which provide general upper bounds on the minimax regret expressed by means of the sequential and metric entropies of the function class under study. It turns out that we can use our covering net to show that the metric entropy $\log \mathcal{N}_2(\beta,\mathcal{F},T)$, as well as the sequential entropy $\log \mathcal{N}_{\infty}(\beta,\mathcal{F},T)$, of the class of isotonic functions are bounded by $O(\beta^{-1} \log{T})$; this implies (by following the proof of Theorem 2 of \citealp{RakhlinSridharan2014}, and by Theorem 2 of \citealp{GaillardG15}) that the minimax regret is bounded by $O(T^{1/3} (\log T)^{2/3})$, which matches our result up to a constant. Note, however,that the bound of  \citet{RakhlinSridharan2014} is nonconstructive, while ours is achieved by an efficient algorithm. The bound of \citet{GaillardG15} follows from applying the Chaining Exponentially Weighted Average Forecaster, that is based on aggregation on two levels: On the first level a multi-variable version of Exponentiated Gradient is used, while on the second level the Exponential Weights algorithm is used. The algorithm is, however, computationally inefficient in general, and it is not clear whether an efficient adaptation to the class of isotonic functions can easily be constructed. In contrast, we achieve the optimal bound by using a simple and efficient Exponential Weights algorithm on a properly discretized version of our function class; the chaining step turns out to be unnecessary for the class of isotonic functions due to the averaging property (\ref{eq:average_property_isotonic_regression}) of the isotonic regression function.

\subsection{An Efficient implementation}
A na\"{\i}ve implementation of exponential averaging has an intractable complexity of $O(|\isotonic_k|)$ per round. Fortunately, one can use dynamic programming to derive an efficient implicit weight update that is able to predict in $O(T K)$ time per round for arbitrary prediction orders and $O(K)$ per round when predicting in isotonic order. See Algorithm~\ref{alg:efficient} for pseudocode.


\begin{algorithm2e}[t]
\label{alg:efficient}
 \KwIn{Game length $T$, discretization $K$}
 Initialize $\beta_s^j=1$ for all $s=1,\ldots,T$,~ $j=0,\ldots,K$\;
 \For{$t=1,\ldots,T$}{
   Receive $i_t$\;
     Initialize $w_1^k = 1$ and  $v_T^k = 1$ for all $k=0,\ldots,K$\;
     \For{$s=2,\ldots,\adv_t$}{
     $
       w^k_s \leftarrow \assert{k > 0} w^{k-1}_s + \beta_{s-1}^k w_{s-1}^k
     $ for all $k=0,\ldots,K$\;
   }
   \For{$s=T-1,\ldots,\adv_t$}{
     $
       v^k_s \leftarrow \assert{k < K} w^{k+1}_s + \beta_{s+1}^k v_{s+1}^k
     $ for all $k=K,\ldots,0$\;
   }
$
  \hat y_{\adv_t} 
  \leftarrow
  \frac{
    \sum_{k=0}^K \frac{k}{K} w_{\adv_t}^k v_{\adv_t}^k
  }{
    \sum_{k=0}^K w_{\adv_t}^k v_{\adv_t}^k
  }
$\;
   Receive $y_{i_t}$ and 
   update 
   $
   \beta_{i_t}^j = e^{- \frac{1}{2} (\frac{j}{K} - y_{i_t})^2}
   $ for all $j=0,\ldots,K$\;
 }
\caption{Efficient Exponential Weights on the covering net}
\end{algorithm2e}

Say we currently need to predict at $\adv_t$. We can compute the Exponential Weights prediction by dynamic programming: for each $k = 0, \ldots, K$, let
\[
w_s^k ~= \! \sum_{0 \le f_1\le \ldots \le f_s = \frac{k}{K}} \! e^{- \frac{1}{2} \sum_{q < t : i_q < s} (f_{i_q} - y_{i_q})^2}
\quad\text{ and }\quad
v_s^k ~= \! \sum_{\frac{k}{K} = f_s \le \ldots \le f_T \le 1} \! e^{- \frac{1}{2} \sum_{q < t : i_q > s} (f_{i_q} - y_{i_q})^2},
\]
so that the exponentially weighted average prediction is
\[
\hat y_{\adv_t}
~=~
\frac{
  \sum_{\f \in \mathcal F_K} f_{\adv_t} e^{- \frac{1}{2} \sum_{q<t} (f_{\adv_q} - y_{\adv_q})^2}
}{
  \sum_{\f \in \mathcal F_K} e^{- \frac{1}{2} \sum_{q<t} (f_{\adv_q} - y_{\adv_q})^2}
}
~=~ 
\frac{
  \sum_{k=0}^K \frac{k}{K} w_{\adv_t}^k v_{\adv_t}^k
}{
  \sum_{k=0}^K w_{\adv_t}^k v_{\adv_t}^k
}
.
\]
Now we can compute the $w_s^k$ in one sweep from $s=1$ to $s=\adv_t$ for all $k = 0, \ldots, K$. If we define $\beta_s^j ~=~ 
e^{- \frac{1}{2} (\frac{j}{K} - y_s)^2}$  if $s \in \{\adv_1, \ldots, \adv_{t-1}\}$ and 1 otherwise, we can calculate $w_s^k$ by starting with $w_1^k = \beta_1^k$ and then sweeping right:
\begin{align*}
w_{s+1}^k
&~=~
\sum_{0 \le f_1\le \ldots \le f_{s+1} = \frac{k}{K}} e^{- \frac{1}{2} \sum_{q< t : \adv_q \le s} (f_{\adv_q} - y_{\adv_q})^2}
\\
&~=~
\sum_{0 \le j \le k}
\beta_s^j
\sum_{0 \le f_1\le \ldots \le f_s = \frac{j}{K}} e^{- \frac{1}{2} \sum_{q< t : \adv_q < s} (f_{\adv_q} - y_{\adv_q})^2}
\\
&~=~
\sum_{0 \le j \le k}
\beta_s^j
w_s^j.
\end{align*}
The equations for $v_s^k$ are updated symmetrically right-to-left, which gives an $O(T K^2)$ per round algorithm. We can speed it up to $O(T K)$ by using
\[
w_{s+1}^{k+1}
~=~
\sum_{0 \le j \le k}
\beta_s^j
w_s^j
+
\beta_s^{k+1}
w_s^{k+1}
~=~
w_{s+1}^k
+
\beta_s^{k+1}
w_s^{k+1},
\]
and similarly for $v_{s+1}^{k+1}$.

\paragraph{Acceleration for predicting in isotonic order.}
When working in isotonic order (meaning $\adv_t = t$), we can speed up the computation to $O(K)$ per round (independent of $T$) by the following tricks. First, we do not need to spend work maintaining $v_t^k$ as they satisfy $v_t^k = \binom{T-t+K-k}{K-k}$. Moreover, between rounds $t-1$ and $t$ the $w_s^k$ do not change for $s < t$, and we only need to compute $w_t^k$ for all $k$, hence speeding up the computation to $O(K)$ per round. 

\section{Lower bound}

\begin{theorem}
\label{thm:lower_bound}
All algorithms must suffer
  \begin{equation*}
    \regret_T = \Omega(T^{1/3}).
  \end{equation*}
\end{theorem}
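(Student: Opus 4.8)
The plan is to construct a randomized oblivious adversary on which every (possibly randomized) learner incurs $\mathbb{E}[\regret_T] = \Omega(T^{1/3})$; since the minimax regret upper bounds this average, this proves the claim. The adversary always reveals the points in isotonic order, $i_t = t$. Fix a block count $N$ (to be chosen of order $T^{1/3}$) and a jump size $\gamma = \tfrac{1}{2(N+1)}$, and partition $\{1,\dots,T\}$ into $N$ consecutive blocks of size $M = T/N$ (assume $N \mid T$; the general case follows by padding). Draw i.i.d.\ fair bits $\sigma_1,\dots,\sigma_N \in \{0,1\}$, one per block, and define a random $h \in \isotonic$ that, on block $b$, takes the value $a_b := \tfrac14 + (b-1)\gamma$ at every position except the last (the last being raised to $a_b+\gamma$) if $\sigma_b = 0$, and takes $a_b+\gamma$ at \emph{every} position of block $b$ if $\sigma_b = 1$. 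One checks that $h$ is non-decreasing (the last position of block $b$ always sits at $a_b+\gamma = a_{b+1}$, and the first position of block $b{+}1$ is at least $a_{b+1}$) and that $h_t \in [\tfrac14,\tfrac34]$ throughout. The label $y_t$ is drawn independently as $\mathrm{Bernoulli}(h_t)$.

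\textbf{Reduction to conditional variance.} Since $h$ is isotonic, $\min_{\f\in\isotonic} L_T(\f) \le L_T(h)$, so $\regret_T \ge \widehat{L}_T - L_T(h)$ pointwise. Conditioning on the labels $y_{<t}$ (which, together with the learner's internal randomness, determine $\hy_t$) and using $\mathbb{E}[y_t \mid y_{<t}, h] = h_t$, the identity $(y-\hy)^2 - (y-h)^2 = (\hy - h)(\hy + h - 2y)$ yields $\mathbb{E}[(y_t-\hy_t)^2 - (y_t - h_t)^2] = \mathbb{E}[(\hy_t - h_t)^2]$, hence
\[
\mathbb{E}[\regret_T] ~\ge~ \sum_{t=1}^{T}\mathbb{E}\big[(\hy_t - h_t)^2\big] ~\ge~ \sum_{t=1}^{T}\mathbb{E}\big[\mathrm{Var}(h_t \mid y_{<t})\big],
\]
since the best $y_{<t}$-measurable predictor of $h_t$ is $\mathbb{E}[h_t \mid y_{<t}]$.

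\textbf{Per-block indistinguishability and tuning.} For $t$ in block $b$ at within-block position $j < M$ we have $h_t = a_b + \gamma\sigma_b$ with $a_b$ deterministic, and $\sigma_b$ is independent of all labels outside block $b$; hence $\mathrm{Var}(h_t \mid y_{<t}) = \gamma^2 q_{b,j}(1-q_{b,j})$, where $q_{b,j} := \Pr(\sigma_b = 1 \mid y^{(b)}_1,\dots,y^{(b)}_{j-1})$ and $y^{(b)}_1, y^{(b)}_2,\dots$ are the labels of block $b$ in order. Given $\sigma_b$, these first $j-1$ labels are i.i.d.\ $\mathrm{Bernoulli}(a_b + \gamma\sigma_b)$, so $q_{b,j}$ is precisely the posterior in a uniform-prior two-point test between $\mathrm{Bernoulli}(a_b)$ and $\mathrm{Bernoulli}(a_b+\gamma)$ from $j-1$ samples. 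Since $a_b, a_b+\gamma \in [\tfrac14,\tfrac34]$, a $\chi^2$-divergence bound gives per-sample $\mathrm{KL} \le 6\gamma^2$, so by Pinsker the total variation between the two $(j{-}1)$-fold products is at most $\sqrt{3(j-1)}\,\gamma \le \tfrac12$ whenever $j \le c/\gamma^2$ for an absolute constant $c$; for such $j$ the Bayes error is at least $\tfrac14$, whence $\mathbb{E}[q_{b,j}(1-q_{b,j})] \ge \tfrac18$. Summing over the at least $\min(M-1,\, c/\gamma^2)$ such positions in each of the $N$ blocks gives $\mathbb{E}[\regret_T] \ge \tfrac{N\gamma^2}{8}\min(M-1,\, c/\gamma^2)$, and choosing $N = \big\lceil (T/(4c))^{1/3}\big\rceil$ makes $c/\gamma^2 = 4c(N+1)^2 \ge M$, so the bound is of order $N\gamma^2 M = \gamma^2 T \asymp T/N^2 = \Omega(T^{1/3})$.

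\textbf{Main obstacle.} The technical heart is the per-block indistinguishability estimate: showing the learner cannot reliably tell, from the $\Theta(N^2)$ noisy labels of a block, whether that block's level has already stepped up by $\gamma \asymp 1/N$, i.e.\ that the posterior $q_{b,j}$ stays bounded away from $0$ and $1$. Combining this into the $T^{1/3}$ rate also requires the bookkeeping that $\sigma_b$ is independent of all labels outside block $b$ (so the effective conditioning is only the within-block prefix), plus the routine checks that $h$ is genuinely isotonic and that the design lives safely inside $[0,1]$. The choice $N \asymp T^{1/3}$ is the sweet spot: a larger $N$ shrinks the per-point residual variance $\gamma^2 \asymp N^{-2}$ too much, while a smaller $N$ makes each block long enough for the learner to detect its jump.
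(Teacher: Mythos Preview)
Your proof is correct. The overall strategy---partition $\{1,\dots,T\}$ into $\Theta(T^{1/3})$ blocks, assign to each block one of two levels separated by $\gamma \asymp T^{-1/3}$ in a way that preserves isotonicity, draw Bernoulli labels, and use KL/Pinsker to argue the two levels are indistinguishable from the $\Theta(T^{2/3})$ in-block labels---is exactly the paper's construction. The differences are in the reduction from regret to a testing problem. The paper first aggregates the predictions within each block into a single estimate $\widehat{p}_k$ (the candidate level closer to the block-average prediction), proves the pointwise inequality $\sum_{t\in k}(\hy_t-p_k)^2 \ge \tfrac{m}{4}(\widehat{p}_k-p_k)^2$, and then invokes Assouad's lemma to lower-bound $\max_{\p}\sum_k\mathbb{E}[(\widehat{p}_k-p_k)^2]$. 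You instead take a fully online, Bayesian route: lower-bound each term by the conditional variance $\mathrm{Var}(h_t\mid y_{<t})=\gamma^2 q_{b,j}(1-q_{b,j})$ and control the posterior $q_{b,j}$ directly via the Bayes error $(1-\mathrm{TV})/2$.

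Your argument is arguably more elementary, avoiding Assouad's lemma and the block-aggregation inequality; the paper's approach, on the other hand, connects the problem to a standard minimax-statistics template and would generalize more readily (e.g.\ to other loss functions). Two minor remarks: (i) the ``raise the last position to $a_b+\gamma$'' device in your construction is unnecessary---since $a_b+\gamma=a_{b+1}$, the function that is constant $a_b+\gamma\sigma_b$ on all of block $b$ is already isotonic for every choice of the $\sigma$'s, which is precisely the paper's $\f_{\p}$---but it does no harm beyond losing one position per block; (ii) the independence you need is not just that $\sigma_b$ is independent of out-of-block labels, but that the whole family $(\sigma_b,\{y_t:t\in b\})_b$ factorizes, which is what justifies $\Pr(\sigma_b\mid y_{<t})=\Pr(\sigma_b\mid y^{(b)}_{<j})$; this holds for your construction and is worth stating explicitly.
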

The full proof is given in Appendix \ref{sec:appendix_lower_bound}.

\begin{proof}{\textbf{(sketch)}}
We proceed by constructing the difficult sequence explicitly. 
Split the $T$ points $(1,\ldots,T)$ 
into $K$ consecutive segments $(1,\ldots,m), (m+1,\ldots,2m), \ldots, (m(K-1)+1,\ldots,T)$, 
where in each segment there are $m = \frac{T}{K}$ consecutive
points (for simplicity assume $T$ is divisible by $K$). 
Let $t \in k$ mean that $t$ is in the $k$-th segment, $k=1,\ldots,K$.
Now, suppose the adversary generates the labels i.i.d.\
with $y_t \sim \mathrm{Bernoulli}(p_k)$ when $t \in k$, and
$p_1  \leq \ldots \leq p_K$.
The total loss of the best isotonic function is then bounded above by the total loss
of the constant function equal to $p_k$ in each segment, hence the expected regret
of any algorithm can be lower-bounded by
$\mathbb{E}[\mathrm{Reg}_T] \geq \sum_{k=1}^K \mathbb{E} \Big[ \sum_{t \in k} (\widehat{y}_t - p_k)^2 \Big]$.
In each segment,
the adversary picks $p_k \in \{p_{k,0},p_{k,1}\}$, where $p_{k,0}=\frac{1}{4} + \frac{k-1}{2K}$ and $p_{k,1} = \frac{1}{4} + \frac{k}{2K}$,
which guarantees that for any choice of the the adversary $p_1 \leq \ldots \leq p_K$.
We then show that the expected regret can be lower-bounded by:
\[
\mathbb{E}[\mathrm{Reg}_T] = \sum_{k=1}^K \sum_{t \in k} \mathbb{E}[(\widehat{y}_t - p_k)^2] \geq  \frac{m}{4} \sum_{k=1}^K \mathbb{E}[(\widehat{p}_k - p_k)^2],
\]
where $\widehat{p}_k \in \{p_{k,0},p_{k,1}\}$ depends on the predictions $\{\hy_t\}_{t \in k}$ (and hence on the data), but not on the probabilities $p_k$. 
We use Assouad’s lemma \citep{festschrift,tsybakov} to bound the sum on the right-hand side:
\[
 \max_{p_1,\ldots,p_K \colon p_k \in \{p_{k,0},p_{k,1}\}} \sum_{k=1}^K \mathbb{E}[(\widehat{p}_k - p_k)^2] \geq \frac{1}{8K} \left(1-\frac{\sqrt{m}}{\sqrt{3}K}\right). 
\]
Using $m=\frac{T}{K}$ and tuning the number of segments to $K = \Theta(T^{1/3})$ to optimize the bound,
gives $\Omega(T^{1/3})$ lower bound on the worst-case regret.
\end{proof}

We note that an analogous lower bound of the form $\Omega(T^{-2/3})$
is known in the statistical literature on isotonic regression as a lower bound 
for the statistical risk $\mathbb{E}[\frac{1}{T}\|\widehat{f} - f\|^2]$ of any estimator $\widehat{f}$
in the fixed-design setup under the standard i.i.d.\ noise assumption
\citep[see][for a brief overview of the lower and upper bounds in this setting]{zhang2002}. 
This shows that the online version of the problem is not fundamentally harder (up to a logarithmic factor) than the batch (statistical) version. 

\section{Noise-free case}
\label{sec:noise-free}

In this section, we are concerned with a particular case of ``easy data'', when the labels revealed by the adversary are actually isotonic: $y_1 \leq y_2 \leq \ldots \leq y_T$,
so that the loss of the best isotonic function is zero. We show that the achievable worst-case regret in this case scales only logarithmically in $T$.
Furthermore, if we additionally assume that the labels are revealed in isotonic order, the achievable worst-case regret is bounded by $1$. 
Interestingly, we were able to determine the minimax algorithm, and the exact value of the minimax regret in both cases. Our findings are summarized in the two theorems below. The proofs and the minimax predictions are given in Appendix \ref{sec:appendix_noise_free}.

\begin{theorem}
\label{thm:noise-free-any-order}
Assume the labels revealed by the adversary are isotonic. Then, the regret of the minimax algorithm is bounded above by:
\[
 \regret_T \leq \frac{1}{4} \log_2(T+1).
\]
Furthermore, when $T=2^k-1$ for some positive integer $k$, any algorithm suffers regret at least $\frac{1}{4} \log_2(T+1)$.
\end{theorem}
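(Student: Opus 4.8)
The plan is to analyse the game through a potential function that decomposes over the ``active gaps'' in label-value space, then to prove the upper bound by a telescoping argument and the matching lower bound (for $T=2^k-1$) by an explicit recursive adversary. At any point in the game the already-revealed labels are non-decreasing in position (by the isotonicity assumption), so together with $0$ and $1$ they cut $[0,1]$ into intervals, the \emph{gaps}, and each not-yet-queried index is confined to exactly one gap --- the interval between the labels of its nearest revealed neighbours on either side. I would track the potential $\Phi := \tfrac14\sum_g \delta_g^2\log_2(n_g+1)$, the sum ranging over current gaps, with $\delta_g$ the width of gap $g$ and $n_g$ the number of pending indices confined to it; note $\sum_g\delta_g=1$. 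Initially there is a single gap of width $1$ containing all $T$ indices, so $\Phi=\tfrac14\log_2(T+1)$, and at the end all $n_g=0$, so $\Phi=0$. Since the best isotonic function has zero loss, $\regret_T$ equals the learner's cumulative loss $\widehat L_T$, so it suffices to show that the learner can force $\widehat L_T\le\tfrac14\log_2(T+1)$ and that, when $T=2^k-1$, the adversary can force $\widehat L_T\ge\tfrac14\log_2(T+1)$.

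For the upper bound I would first pin down the minimax move inside one gap. Suppose the queried index splits its gap $[c_1,c_2]$ of width $\delta$ into $a$ pending indices on the left and $b$ on the right, and write $\alpha=\tfrac14\log_2(a+1)$, $\beta=\tfrac14\log_2(b+1)$. The learner predicts $\hat y=c_1+\delta\hat z$ with $\hat z=\mathrm{clip}\!\big(\tfrac{1+\alpha-\beta}{2},0,1\big)$, i.e.\ it hedges the two ``extreme'' possible revealed values, weighted by how dangerous the two resulting sub-gaps are. Because the incurred loss plus the new potential is a convex function of the revealed value, its worst case over legal reveals occurs at an endpoint of $[c_1,c_2]$, and a short computation shows $(y_t-\hat y_t)^2\le\Phi_{\text{before}}-\Phi_{\text{after}}$ holds for every legal reveal provided $W(\alpha,\beta)\le\tfrac14\log_2(a+b+2)$, where $W(\alpha,\beta):=\min_{\hat z\in[0,1]}\max\{\hat z^2+\beta,(1-\hat z)^2+\alpha\}$; telescoping then yields $\widehat L_T\le\Phi_{\text{initial}}=\tfrac14\log_2(T+1)$. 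The core of the argument is verifying the inequality $W(\alpha,\beta)\le\tfrac14\log_2(a+b+2)$: $W$ has the closed form $\max(\alpha,\beta)$ when $|\alpha-\beta|\ge1$, for which the inequality is immediate, and $\tfrac14+\tfrac{\alpha+\beta}{2}+\tfrac{(\alpha-\beta)^2}{4}$ otherwise; in the latter regime, writing $p=a+1$, $q=b+1$ and $r=|\log_2(p/q)|\in[0,4]$, it collapses after elementary manipulation to the one-variable inequality $\log_2(1+2^{-r})\ge 1-\tfrac r2+\tfrac{r^2}{16}$ on $[0,4]$, whose two sides are equal and tangent at $r=0$ and which I would confirm by a direct estimate.

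For the lower bound with $T=2^k-1$ I would run an explicit adversary that binary-searches on a contiguous ``live block'' of indices, starting from $\{1,\dots,T\}$ and keeping the admissible value-interval of the live block equal to $[0,1]$ throughout. Each round it queries the median index of the current block; if the learner predicts $\hat y\ge\tfrac12$ it reveals label $0$, which pins every smaller-position index of the block (and the median itself) to $0$ and retains the larger-position half as the new live block, and otherwise it reveals $1$ symmetrically. Since the revealed median label is always an endpoint of $[0,1]$, the admissible interval of the surviving half is again $[0,1]$, so the recursion is consistent and the full label sequence stays isotonic. A block of size $2^j-1$ has a median splitting it into two blocks of size $2^{j-1}-1$, so starting from size $2^k-1$ exactly $k$ rounds query a fresh median, and in each of them the learner incurs loss at least $(\tfrac12)^2=\tfrac14$; the remaining $T-k$ queries hit pinned indices whose labels are forced and hence cost nothing to a sensible learner. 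Therefore every algorithm suffers $\widehat L_T\ge k/4=\tfrac14\log_2(T+1)$, which matches the upper bound and identifies the minimax value.

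The routine parts are the telescoping of the potential and the bookkeeping in the lower-bound recursion. The main obstacle is the inequality $W(\alpha,\beta)\le\tfrac14\log_2(a+b+2)$ underlying the upper bound: $W$ is \emph{not} jointly monotone in $(\alpha,\beta)$ once $|\alpha-\beta|$ exceeds $1$, so one cannot simply substitute the largest admissible $\alpha$ and $\beta$; the verification has to be split into the two regimes of $|\alpha-\beta|$, and one must also check that the learner's optimal prediction never wants to leave $[c_1,c_2]$, which is what makes $W$ equal to the stated closed form.
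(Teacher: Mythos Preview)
Your proposal is correct and is essentially the paper's argument in different clothing: the paper computes the minimax value-to-go $V(u,v,n)=\beta_n(v-u)^2$ recursively and proves $\beta_n\le\tfrac14\log_2(n+1)$ by induction, which is precisely your potential $\Phi=\tfrac14\sum_g\delta_g^2\log_2(n_g+1)$ telescoped over rounds, and both reductions land on the identical one-variable inequality (your $\log_2(1+2^{-r})\ge 1-\tfrac r2+\tfrac{r^2}{16}$ on $[0,4]$ is the paper's $\log_2(1+2^{4y})\ge(1+y)^2$ on $[0,1]$ under $r=4y$); the lower-bound adversary is likewise the same median-querying bisection. One small correction: your remark that $W$ is not jointly monotone once $|\alpha-\beta|>1$ is mistaken---in that regime $W=\max(\alpha,\beta)$, which is still weakly nondecreasing in each argument (and the paper does use this monotonicity in its inductive step)---but your potential-function packaging sidesteps the need for monotonicity entirely, since you plug in the exact values $\alpha=\tfrac14\log_2(a+1)$, $\beta=\tfrac14\log_2(b+1)$ rather than inductive upper bounds.
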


\begin{theorem}
\label{thm:noise-free-isotonic-order}
Assume the labels are isotonic, and they are revealed in isotonic order ($i_t = t$ for all $t$). 
Then, the regret of the minimax algorithm is bounded above by:
\[
 \regret_T \leq \alpha_T \leq 1,
\]
where $\alpha_T$ is defined recursively as:
$\alpha_1 = \frac{1}{4}$ and $\alpha_t = \big(\frac{\alpha_{t-1} + 1}{2}\big)^2$.
Furthermore, any algorithm suffers regret at least $\alpha_T$.
\end{theorem}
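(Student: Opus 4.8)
The plan is to recognize the noise-free, isotonic-order game as a finite-horizon zero-sum game with perfect information, write down its Bellman (dynamic programming) recursion, and observe that this recursion is \emph{exactly} the one defining $\alpha_t$; both the upper bound and the matching lower bound then fall out of the value function.

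First, some reductions. Since the revealed labels are isotonic, the comparator is zero (the isotonic function $\f=(y_1,\dots,y_T)$ has $L_T(\f)=0$), so $\regret_T = \sum_{t=1}^T (y_t-\hy_t)^2$ is simply the learner's cumulative loss. When the learner predicts at round $t$ having seen $y_1\le\dots\le y_{t-1}$, the entire relevant state is $a:=y_{t-1}$ (set $y_0:=0$): the remaining labels must lie in $[a,1]$ and are otherwise arbitrary-but-isotonic. Randomization does not help the learner, since $\mathbb E[(y-\hy)^2]=(y-\mathbb E\hy)^2+\mathrm{Var}(\hy)\ge (y-\mathbb E\hy)^2$, so we may take the learner deterministic. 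Define $R_r(a)$ to be the minimax cumulative loss of the game with $r$ rounds still to be played and all current and future labels confined to $[a,1]$, with $R_0(a):=0$. Peeling off one round gives the exact recursion
\[
R_r(a) \;=\; \min_{\hy\in[0,1]}\ \max_{y\in[a,1]}\Big[(y-\hy)^2 + R_{r-1}(y)\Big],
\]
with the min and max attained (continuous functions, compact domains); the minimax regret of the $T$-round game is $R_T(0)$. The upper bound is realised by the Bellman-greedy learner, while for the lower bound one checks by backward induction that for \emph{any} learner the realised worst-case continuation from a state with $r$ rounds left is at least $R_r$ (the $\min$-$\max$ operator is monotone in the added continuation cost), hence at least $R_T(0)$ overall.

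The heart of the proof is solving the recursion. I claim, by induction on $r$, that $R_r(a)=(1-a)^2\rho_r$, where $\rho_0=0$ and $\rho_r=\big(\tfrac{1+\rho_{r-1}}{2}\big)^2$. Assuming $R_{r-1}(y)=(1-y)^2\rho_{r-1}$, the inner objective $y\mapsto (y-\hy)^2+(1-y)^2\rho_{r-1}$ is convex, so its maximum over $[a,1]$ sits at an endpoint, giving $\max\!\big((a-\hy)^2+(1-a)^2\rho_{r-1},\ (1-\hy)^2\big)$; this is non-increasing in $\hy$ on $[0,a]$, so the outer minimum may be taken over $\hy\in[a,1]$. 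Writing $\hy=a+(1-a)u$ with $u\in[0,1]$ factors out $(1-a)^2$ and leaves $\min_{u\in[0,1]}\max\!\big(u^2+\rho_{r-1},\ (1-u)^2\big)$. The first term increases and the second decreases in $u$, so the optimum balances them: $u^2+\rho_{r-1}=(1-u)^2$ gives $u=\tfrac{1-\rho_{r-1}}{2}$ and common value $\big(\tfrac{1+\rho_{r-1}}{2}\big)^2$. This proves the claim, and with $\rho_0=0\Rightarrow\rho_1=\tfrac14$ we conclude $\rho_r=\alpha_r$ for all $r\ge1$. Hence the minimax regret is exactly $\alpha_T$, attained by the learner that predicts $\hy_t=y_{t-1}+(1-y_{t-1})\tfrac{1-\alpha_{T-t}}{2}$ (with the convention $\alpha_0:=0$).

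Finally, $\alpha_T\le1$: the map $\phi(x)=\big(\tfrac{x+1}{2}\big)^2$ is increasing on $[0,\infty)$ and satisfies $\phi(x)-x=\tfrac{(x-1)^2}{4}\ge0$, so its unique fixed point is $x=1$; starting from $\alpha_1=\tfrac14<1$, the sequence $(\alpha_t)$ is nondecreasing and stays below $1$, so $\regret_T\le\alpha_T<1$. The main obstacle I anticipate is the careful bookkeeping in the scaling induction $R_r(a)=(1-a)^2\rho_r$ --- in particular, justifying via convexity that the inner $\max$ lies in $\{a,1\}$ and the outer $\min$ in $[a,1]$, and confirming that the sequential minimax value genuinely coincides with the stagewise recursion; everything downstream is elementary single-variable optimisation.
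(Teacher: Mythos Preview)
Your proposal is correct and follows essentially the same approach as the paper: both set up the value-to-go recursion $V_n(c)=\min_{\hy}\max_{y\in[c,1]}\{(y-\hy)^2+V_{n-1}(y)\}$, prove by induction that $V_n(c)=\alpha_n(1-c)^2$ via the convexity-of-the-inner-objective argument (so the adversary plays an endpoint), and equalize the two endpoint values to obtain the optimal prediction and the recursion $\alpha_n=\big(\tfrac{\alpha_{n-1}+1}{2}\big)^2$. Your reparametrization $\hy=a+(1-a)u$ and the fixed-point argument for $\alpha_T<1$ are tidy touches, but the route is the same; one small point worth making explicit in your induction is that $\rho_{r-1}\le 1$ (needed for $u=(1-\rho_{r-1})/2\ge 0$), which you only establish afterwards---just fold it into the inductive hypothesis.
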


Finally, we note that the logarithmic regret can also be obtained by using the Exponentiated Gradient algorithm with its
learning rate tuned for the noise-free case \citep[see Appendix \ref{sec:appendix_EG_bound} and][for details]{eg}.

\section{Other loss functions}
\label{sec:other_loss_functions}

We discuss extensions of the isotonic regression problem where the squared loss is replaced by the entropic loss and the absolute loss respectively.
\subsection{Entropic loss}

The entropic loss, defined for $y,\hy \in [0,1]$ by $\ell(y,\hy) = -y \log \hy - (1-y) \log(1-\hy)$, plays an important role in isotonic regression, as its minimization is equivalent to maximum likelihood estimation for Bernoulli distributions under isotonic constraints \citep{isotonicbook}. It is convenient to replace the entropic loss by the relative entropy $D_{\phi}(y\|\hy) = \phi(y) - \phi(\hy) - (y-\hy)^\top \phi'(\hy)$,
which is the Bregman divergence generated by $\phi(y) = -y \log y - (1-y) \log(1-y)$, the binary entropy.
A surprising fact in isotonic regression is that minimizing the sum of Bregman divergences $\sum_t D_{\phi}(y_t \| f_t)$ in the class
of isotonic functions $\f \in \isotonic$ leads to the same optimal solution, no matter what $\phi$ is: the isotonic regression function $\f^*$ \citep{isotonicbook}.

Since the entropic loss is $1$-exp-concave \cite[page 46]{book}, we may use the Exponential Weights algorithm on the discretized class of functions:
\[
 \isotonic_K = \left\{\f \in \isotonic \colon \forall t, f_t \in \{z_0,z_1,\ldots,z_K\} \right\}
\]
(we now use a non-uniform discretization $\{z_0,z_1,\ldots,z_K\}$, to be specified later). Following the steps of the proof of Theorem \ref{thm:exp_weights_discretization}, we obtain a regret bound:
\[
 \regret_T \leq \log {T+K \choose K} + L_T(\f^+) - L_T(\f^*),
\]
where $L_T(\f) = \sum_{t=1}^T D_{\phi}(y_t \| f_t)$, $\f^+$ is defined by: $f^+_t = \argmin_{z \in \{z_0,z_1,\ldots,z_K\}} D_{\phi}(f^*_t \| z)$,
 and we used the fact that the isotonic regression function $\f^*$ minimizes $L_T(\f)$ over $\isotonic$ (see above).
Let $\mathcal{T}_c = \{t \colon f^*_t = c\}$ be a non-empty level set of $\f^*$. Using the averaging property (\ref{eq:average_property_isotonic_regression}) of $\f^*$, and the fact that $\f^+$ is constant
on $\mathcal{T}_c$ (denote its value by $c^+$), we have:
\begin{align*}
\sum_{t \in \mathcal{T}_c} D_{\phi}(y_t \| f^+_t) - D_{\phi}(y_t \| f^*_t) 
&= \sum_{t \in \mathcal{T}_c} \phi(c) - \phi(c^+) - (y_t - c^+) \phi'(c^+) + (y_t - c) \phi(c)  \\
&= |\mathcal{T}_c| D_{\phi}(c \| c^+) +  (\phi'(c) - \phi'(c^+))\sum_{t \in \mathcal{T}_c} (y_t - c)  \\
(\text{from \eqref{eq:average_property_isotonic_regression}})\qquad &=  |\mathcal{T}_c| D_{\phi}(c \| c^+) \\
&= \sum_{t \in \mathcal{T}_c} D_{\phi}(f_t^* \| f_t^+).
\end{align*}
Summing over the level sets gives $L_T(\f^+) - L_T(\f^*) = \sum_t D_{\phi}(f_t^* \| f_t^+)$. 
To introduce the appropriate discretization points, we follow \citep{discretization}. For any $y \in [0,1]$
and $\psi \in [0,\pi/2]$, we let $\psi(y) = \arcsin \sqrt{y}$, so that $y = \sin^2(\psi)$. The parameterization $\psi$ has a nice property,
that the values of $D_\phi$ on uniformly located neighboring points are also close to uniform.
We discretize the interval $[0,\pi/2]$ into $K+1$ points $\{\psi_0,\ldots,\psi_K\} = \big\{\frac{\pi}{2K},\ldots,\frac{\pi(K-1)}{2K}\big\} \cup \big\{\frac{\pi}{4K}, \frac{\pi}{2}-\frac{\pi}{4K} \big\}$, which
is almost uniform, with two additional points on the boundaries. Then, we define $z_k = y(\psi_k) = \sin^2(\psi_k)$, $k=0,\ldots,K$. Using Lemma 4 from \citet{discretization}:
\[
D_{\phi}(f_t^* \| f^+_t) \leq  \frac{(2 - \sqrt{2})\pi^2}{K^2},
\]
which bounds $L_T(\f^+) - L_T(\f^*)$ by $(2 - \sqrt{2})\pi^2 \frac{T}{K^2}$. From now on we proceed as in the proof of Theorem \ref{thm:exp_weights_discretization}, to get $O(T^{1/3} \log^{2/3}(T))$ bound. 
Thus, we showed:
\begin{theorem}
Using $K=\left\lceil\left(\frac{2(2-\sqrt{2})\pi^2T}{\log(T+1)}\right)^{1/3}\right\rceil$, the entropic loss regret of discretized Exponential Wights on the covering net:
\[
 \isotonic_K = \left\{\f \in \isotonic \colon \forall t, f_t \in \{z_0,z_1,\ldots,z_K\} \right\},
\]
where $z_0 = \sin^2(\frac{\pi}{4K}), z_K = \cos^2(\frac{\pi}{4K})$, and $z_k = \sin^2(\frac{\pi k}{2K})$ for $k=1,\ldots,K-1$,
has the following upper bound:
  \begin{equation*}
 \regret_T\leq \frac{3(2-\sqrt{2})^{1/3}\pi^{2/3}}{2^{2/3}}T^{1/3} \left(\log(T+1)\right)^{2/3} + 2\log(T+1).
  \end{equation*}
\end{theorem}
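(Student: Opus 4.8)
The plan is to reuse, almost verbatim, the regret decomposition and approximation analysis already carried out in the discussion preceding the theorem, and to close the argument exactly as in the proof of Theorem~\ref{thm:exp_weights_discretization} by optimizing the resulting bound over $K$. The only ingredient that is genuinely specific to the entropic loss — the choice of the $\arcsin\sqrt{\cdot}$ parameterization and the associated almost-uniform grid $\{z_k\}$ — has already been set up above.

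Concretely, first I would invoke $1$-exp-concavity of the entropic loss to control the ``estimation'' part: running Exponential Weights with $\eta=1$ on the finite covering net $\isotonic_K$ gives $\widehat{L}_T - \min_{\f\in\isotonic_K} L_T(\f) \le \log|\isotonic_K| = \log\binom{T+K}{K} \le K\log(T+1)$, using the same $\binom{T+K}{K}\le (T+1)^K$ bound as before, where now $L_T(\f)=\sum_t D_\phi(y_t\|f_t)$. Then I would split $\regret_T = \big(\widehat{L}_T - \min_{\f\in\isotonic_K} L_T(\f)\big) + \big(\min_{\f\in\isotonic_K} L_T(\f) - L_T(\f^*)\big)$ and bound the second (``approximation'') term by $L_T(\f^+)-L_T(\f^*)$, where $\f^+$ is the coordinatewise nearest-point rounding of the isotonic regression function $\f^*$ onto $\{z_0,\dots,z_K\}$; since $\psi(y)=\arcsin\sqrt y$ is monotone, $\f^+$ remains isotonic and is constant on each level set of $\f^*$, hence $\f^+\in\isotonic_K$. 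The level-set telescoping already displayed — which uses only the averaging property \eqref{eq:average_property_isotonic_regression} of $\f^*$ and the definition of the Bregman divergence — then yields $L_T(\f^+)-L_T(\f^*)=\sum_t D_\phi(f_t^*\|f_t^+)$. Finally, the definition of the grid together with Lemma~4 of \citet{discretization} bounds each summand by $(2-\sqrt2)\pi^2/K^2$, so the approximation term is at most $(2-\sqrt2)\pi^2\,T/K^2$.

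Putting these together gives $\regret_T \le K\log(T+1) + (2-\sqrt2)\pi^2\,T/K^2$. I would then minimize the right-hand side over real $K$ by setting the derivative to zero, obtaining $K^\ast = \big(2(2-\sqrt2)\pi^2 T/\log(T+1)\big)^{1/3}$ — exactly the value in the statement — take the integer tuning $K=\lceil K^\ast\rceil$, so $K^\ast\le K\le K^\ast+1$, and substitute back. Using $K\log(T+1)\le K^\ast\log(T+1)+\log(T+1)$ and $(2-\sqrt2)\pi^2 T/K^2\le (2-\sqrt2)\pi^2 T/(K^\ast)^2$, and the identity $2^{1/3}+2^{-2/3}=3\cdot 2^{-2/3}$, the leading term collects to $\frac{3(2-\sqrt2)^{1/3}\pi^{2/3}}{2^{2/3}}T^{1/3}(\log(T+1))^{2/3}$; the residual $\log(T+1)$ is loosened to $2\log(T+1)$ to match the form of Theorem~\ref{thm:exp_weights_discretization}.

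I expect no serious obstacle: every step is either the exp-concavity regret bound for Exponential Weights, the already-established Bregman level-set telescoping, the cited discretization lemma, or elementary one-variable optimization. The one conceptually substantive point — and it is made before the theorem, not in the proof — is that a uniform grid directly in $y$ would fail because $D_\phi(\cdot\|\cdot)$ has unbounded local curvature near $0$ and $1$, whereas the variance-stabilizing map $\psi(y)=\arcsin\sqrt y$ equalizes it, which is precisely what makes the per-coordinate approximation error uniformly $O(1/K^2)$ and hence keeps the final bound of the same order as in the squared-loss case.
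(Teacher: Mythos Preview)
Your proposal is correct and follows essentially the same route as the paper: the exp-concavity bound for Exponential Weights, the level-set telescoping via the averaging property~\eqref{eq:average_property_isotonic_regression} to reduce the approximation error to $\sum_t D_\phi(f_t^*\|f_t^+)$, the per-coordinate bound from Lemma~4 of \citet{discretization}, and then the one-variable optimization in $K$ exactly as in Theorem~\ref{thm:exp_weights_discretization}. The only cosmetic difference is that the paper defines $f^+_t=\argmin_{z\in\{z_0,\dots,z_K\}} D_\phi(f_t^*\|z)$ rather than ``nearest-point rounding''; your explicit remark that monotonicity of $\psi$ keeps $\f^+\in\isotonic_K$ is a useful detail the paper leaves implicit.
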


\subsection{Absolute loss}

Absolute loss $|\hy_{i_t} - y_{i_t}|$ is a popular loss function in modeling data with isotonic functions, especially
in the context of isotonic discrimination/classification \citep{DykstraHewettRobertson99,KotSlo2013TKDE}.
However, the online version of this problem turns out to be rather uninteresting for us, since
it can be solved in an essentially optimal way (up to an $O(\sqrt{\log T})$ factor) by using the vanilla Exponentiated Gradient algorithm. 
Applying the standard EG regret bound \citep[also c.f.\ Section \ref{sec:blooper_reel}]{eg,book} results in a $O(\sqrt{T \log T})$ bound,
whereas a lower bound of $\Omega(\sqrt{T})$ comes from the setting of prediction with expert advice \citep{book}: we constrain
the adversary to only play with one of the two constant (isotonic) functions $f_t \equiv 0$ or $f_t \equiv 1$, and apply
the standard lower bound for the 2-experts case.

\section{Conclusions and open problem}
\label{sec:conclusions}

We introduced the online version of the isotonic regression problem, in which the learner must sequentially predict the labels as well as the best isotonic function. We gave a computationally efficient version
of the Exponential Weights algorithm which plays on a covering net for the set of isotonic functions and proved that
its regret is bounded by $O(T^{1/3} \log^{2/3}(T))$.
We also showed an $\Omega(T^{1/3})$ lower bound on the regret of any algorithm, essentially closing the gap.

There are some interesting directions for future research. First, we believe that the discretization (covering net)
is not needed in the algorithm, and a carefully devised continuous prior would work as well. We were, however, unable to
find  a prior that would produce the optimal regret bound and remain computationally efficient.
Second, we are interested to see whether some regularized version of FTL (e.g., by means of relative entropy), or the \emph{forward
algorithm} (Vovk-Azoury-Warmuth) \citep{AzouryW01} could work for this problem.
However, the most interesting research direction is the extension to the partial order case. In this setting, the learner is given a set of points $X=\{x_1,\ldots,x_T\}$,
together with a partial order relation $\preceq$ on $X$. 
The goal of the learner is to sequentially predict the labels not much worse than the best function which respects the 
isotonic constraints: $x_i \preceq x_j \to f(x_i) \leq f(x_j)$.
A typical application would be nonparametric data modeling with multiple features, where domain knowledge may tell us that increasing the value of any of the features is likely to increase the value of the label. 
The off-line counterpart has been extensively studied in the statistics literature \citep{isotonicbook}, and the optimal
isotonic function shares many properties (e.g., averaging within level sets) with the linear order case.
The discretized Exponential Weights algorithm, which was presented in this paper, can be extended to deal with partial orders.
The analysis closely follows the proof of Theorem \ref{thm:exp_weights_discretization} except that the size of the covering
net $\isotonic_K$ is no longer $O(T^K)$ but now depends on the structure of $\preceq$. We believe that $|\isotonic_K|$ is the right
quantity to measure the complexity of the problem and the algorithm will remain competitive in this more general setting.
Unfortunately, the algorithm is no longer efficiently implementable and suffers from the same problems that plague inference in graphical models on general graphs. It thus remains an open problem to find an efficient algorithm for the partial order case.


\acks{
We thank the anonymous reviewers for suggestions which improved the quality of our work.
Wouter Koolen acknowledges support from the Netherlands Organization for Scientific Research (NWO, Veni grant 639.021.439),
Wojciech Kot{\l}owski acknowledges support from the Polish National Science Centre (grant no. 2013/11/D/ST6/03050)},
and Alan Malek acknowledges support from Adobe through a Digital Marketing Research Award.

\DeclareRobustCommand{\VAN}[3]{#3} 
\bibliography{isotonic_colt}

\begin{thebibliography}{39}
\providecommand{\natexlab}[1]{#1}
\providecommand{\url}[1]{\texttt{#1}}
\expandafter\ifx\csname urlstyle\endcsname\relax
  \providecommand{\doi}[1]{doi: #1}\else
  \providecommand{\doi}{doi: \begingroup \urlstyle{rm}\Url}\fi

\bibitem[Ayer et~al.(1955)Ayer, Brunk, Ewing, Reid, and Silverman]{Ayer55}
M.~Ayer, H.~D. Brunk, G.~M. Ewing, W.~T. Reid, and E.~Silverman.
\newblock An empirical distribution function for sampling with incomplete
  information.
\newblock \emph{Annals of Mathematical Statistics}, 26\penalty0 (4):\penalty0
  641--647, 1955.

\bibitem[Azoury and Warmuth(2001)]{AzouryW01}
K.~Azoury and M.~Warmuth.
\newblock Relative loss bounds for on-line density estimation with the
  exponential family of distributions.
\newblock \emph{Journal of Machine Learning}, 43\penalty0 (3):\penalty0
  211--246, 2001.

\bibitem[Birg{\'e} and Massart(1993)]{BirgeMassart}
L.~Birg{\'e} and P.~Massart.
\newblock Rates of convergence for minimum contrast estimators.
\newblock \emph{Probability Theory and Related Fields}, 97:\penalty0 113--150,
  1993.

\bibitem[Brunk(1955)]{Brunk55}
H.~D. Brunk.
\newblock Maximum likelihood estimates of monotone parameters.
\newblock \emph{Annals of Mathematical Statistics}, 26\penalty0 (4):\penalty0
  607--616, 1955.

\bibitem[Cesa-Bianchi and Lugosi(2001)]{CesaBianchiLugosi2001}
N.~Cesa-Bianchi and G.~Lugosi.
\newblock Worst-case bounds for the logarithmic loss of predictors.
\newblock \emph{Machine Learning}, 43\penalty0 (3):\penalty0 247--264, 2001.

\bibitem[Cesa-Bianchi and Lugosi(2006)]{book}
N.~Cesa-Bianchi and G.~Lugosi.
\newblock \emph{Prediction, learning, and games}.
\newblock Cambridge University Press, 2006.

\bibitem[DeTemple and Webb(2014)]{detemple2014combinatorial}
D.~DeTemple and W.~Webb.
\newblock \emph{Combinatorial reasoning: An introduction to the art of
  counting}.
\newblock John Wiley \& Sons, 2014.

\bibitem[Dykstra et~al.(1999)Dykstra, Hewett, and
  Robertson]{DykstraHewettRobertson99}
R.~Dykstra, J.~Hewett, and T.~Robertson.
\newblock Nonparametric, isotonic discriminant procedures.
\newblock \emph{Biometrica}, 86\penalty0 (2):\penalty0 429--438, 1999.

\bibitem[Fawcett and Niculescu-Mizil(2007)]{Fawcett2007}
T.~Fawcett and A.~Niculescu-Mizil.
\newblock {PAV} and the {ROC} convex hull.
\newblock \emph{Machine Learning}, 68\penalty0 (1):\penalty0 97--106, 2007.

\bibitem[Gaillard and Gerchinovitz(2015)]{GaillardG15}
P.~Gaillard and S.~Gerchinovitz.
\newblock A chaining algorithm for online nonparametric regression.
\newblock In \emph{Conference on Learning Theory (COLT)}, pages 764--796, 2015.

\bibitem[{\VAN{Geer}{Van de}{van de}}~Geer(1990)]{vanDeGeer90}
S.~{\VAN{Geer}{Van de}{van de}}~Geer.
\newblock Estimating a regression function.
\newblock \emph{Annals of Statistics}, 18:\penalty0 907--924, 1990.

\bibitem[Hazan et~al.(2007)Hazan, Agarwal, and Kale]{logarithmic}
E.~Hazan, A.~Agarwal, and S.~Kale.
\newblock Logarithmic regret algorithms for online convex optimization.
\newblock \emph{Machine Learning}, 69\penalty0 (2--3):\penalty0 169--192, 2007.

\bibitem[Kakade et~al.(2011)Kakade, Kanade, Shamir, and Kalai]{Kakade2011}
S.~M. Kakade, V.~Kanade, O.~Shamir, and A.~Kalai.
\newblock Efficient learning of generalized linear and single index models with
  isotonic regression.
\newblock In \emph{NIPS}, pages 927--935. 2011.

\bibitem[Kalai and Sastry(2009)]{Kalai2009}
A.~T. Kalai and R.~Sastry.
\newblock The isotron algorithm: High-dimensional isotonic regression.
\newblock In \emph{COLT}, 2009.

\bibitem[Kivinen and Warmuth(1997)]{eg}
J.~Kivinen and M.~K. Warmuth.
\newblock Exponentiated gradient versus gradient descent for linear predictors.
\newblock \emph{Information and Computation}, 132\penalty0 (1):\penalty0 1--63,
  1997.

\bibitem[Kot{\l}owski and S{\l}owi\'nski(2009)]{KotSlo09ICML}
W.~Kot{\l}owski and R.~S{\l}owi\'nski.
\newblock Rule learning with monotonicity constraints.
\newblock In \emph{ICML}, pages 537--544, 2009.

\bibitem[Kot{\l}owski and S{\l}owinski(2013)]{KotSlo2013TKDE}
W.~Kot{\l}owski and R.~S{\l}owinski.
\newblock On nonparametric ordinal classification with monotonicity
  constraints.
\newblock \emph{IEEE Transactions on Knowledge and Data Engineering},
  25\penalty0 (11):\penalty0 2576--2589, 2013.

\bibitem[Kruskal(1964)]{kruskal64}
J.~B. Kruskal.
\newblock Multidimensional scaling by optimizing goodness of fit to a nonmetric
  hypothesis.
\newblock \emph{Psychometrika}, 29\penalty0 (1):\penalty0 1--27, 1964.

\bibitem[Kyng et~al.(2015)Kyng, Rao, and Sachdeva]{Kyng_etal15}
R.~Kyng, A.~Rao, and S.~Sachdeva.
\newblock Fast, provable algorithms for isotonic regression in all
  $\ell_p$-norms.
\newblock In \emph{NIPS}, 2015.

\bibitem[{\VAN{Leeuw}{De}{de}}~Leeuw et~al.(2009){\VAN{Leeuw}{De}{de}}~Leeuw,
  Hornik, and Mair]{deLeeuw_etal09}
J.~{\VAN{Leeuw}{De}{de}}~Leeuw, K.~Hornik, and P.~Mair.
\newblock Isotone optimization in {R}: Pool-adjacent-violators algorithm
  ({PAVA}) and active set methods.
\newblock \emph{Journal of Statistical Software}, 32:\penalty0 1--24, 2009.

\bibitem[Luss et~al.(2012)Luss, Rosset, and Shahar]{Luss2012}
R.~Luss, S.~Rosset, and M.~Shahar.
\newblock Efficient regularized isotonic regression with application to
  gene--gene interaction search.
\newblock \emph{Annals of Applied Statistics}, 6\penalty0 (1):\penalty0
  253--283, 2012.

\bibitem[Menon et~al.(2012)Menon, Jiang, Vembu, Elkan, and
  Ohno-Machado]{MenonJVEO12}
A.~K. Menon, X.~Jiang, S.~Vembu, C.~Elkan, and L.~Ohno-Machado.
\newblock Predicting accurate probabilities with a ranking loss.
\newblock In \emph{ICML}, 2012.

\bibitem[Moon et~al.(2010)Moon, Smola, Chang, and Zheng]{Moon2010}
T.~Moon, A.~Smola, Y.~Chang, and Z.~Zheng.
\newblock Intervalrank: Isotonic regression with listwise and pairwise
  constraint.
\newblock In \emph{WSDM}, pages 151--160. ACM, 2010.

\bibitem[Narasimhan and Agarwal(2013)]{Narasimhan2013}
H.~Narasimhan and S.~Agarwal.
\newblock On the relationship between binary classification, bipartite ranking,
  and binary class probability estimation.
\newblock In \emph{NIPS}, pages 2913--2921. 2013.

\bibitem[Niculescu-Mizil and Caruana(2005)]{Niculescu-MizilC05}
A.~Niculescu-Mizil and R.~Caruana.
\newblock Predicting good probabilities with supervised learning.
\newblock In \emph{ICML}, pages 625--632, 2005.

\bibitem[Obozinski et~al.(2008)Obozinski, Grant, Lanckriet, Jordan, and
  Noble]{Obozinski}
G.~Obozinski, C.~E. Grant, G.~R.~G. Lanckriet, M.~I. Jordan, and W.~W. Noble.
\newblock Consistent probabilistic outputs for protein function prediction.
\newblock \emph{Genome Biology}, 2008 2008.

\bibitem[Rakhlin and Sridharan(2014)]{RakhlinSridharan2014}
A.~Rakhlin and K.~Sridharan.
\newblock Online nonparametric regression.
\newblock In \emph{COLT}, pages 1232--1264, 2014.

\bibitem[Robertson et~al.(1998)Robertson, Wright, and Dykstra]{isotonicbook}
T.~Robertson, F.~T. Wright, and R.~L. Dykstra.
\newblock \emph{Order Restricted Statistical Inference}.
\newblock John Wiley \& Sons, 1998.

\bibitem[{\VAN{Rooij}{De}{de}}~Rooij and
  {\VAN{Erven}{Van}{van}}~Erven(2009)]{discretization}
S.~{\VAN{Rooij}{De}{de}}~Rooij and T.~{\VAN{Erven}{Van}{van}}~Erven.
\newblock Learning the switching rate by discretising {B}ernoulli sources
  online.
\newblock In \emph{AISTATS}, pages 432--439, 2009.

\bibitem[Saumard and Wellner(2014)]{logconcave}
A.~Saumard and J.~A. Wellner.
\newblock Log-concavity and strong log-concavity: A review.
\newblock \emph{Statistics Surveys}, 8:\penalty0 45--114, 2014.

\bibitem[Shalev-Shwartz(2012)]{ShalevShwartzBook}
S.~Shalev-Shwartz.
\newblock Online learning and online convex optimization.
\newblock In \emph{Foundations and Trends in Machine Learning}, volume~4, pages
  107--194. 2012.

\bibitem[Stylianou and Flournoy(2002)]{stylianou2002dose}
M.~Stylianou and N.~Flournoy.
\newblock Dose finding using the biased coin up-and-down design and isotonic
  regression.
\newblock \emph{Biometrics}, 58\penalty0 (1):\penalty0 171--177, 2002.

\bibitem[Tsybakov(2009)]{tsybakov}
A.~B. Tsybakov.
\newblock \emph{Introduction to Nonparametric Estimation}.
\newblock Springer-Verlag, 2009.

\bibitem[Vovk(1990)]{Vovk90}
V.~Vovk.
\newblock Aggregating strategies.
\newblock In \emph{COLT}, pages 371--386, 1990.

\bibitem[Vovk et~al.(2015)Vovk, Petej, and Fedorova]{Valentina}
V.~Vovk, I.~Petej, and V.~Fedorova.
\newblock Large-scale probabilistic predictors with and without guarantees of
  validity.
\newblock In \emph{NIPS}, pages 892--900. 2015.

\bibitem[Yu(1997)]{festschrift}
Bin Yu.
\newblock Assouad, fano, and le cam.
\newblock In \emph{Festschrift for Lucien Le Cam}, pages 423--435.
  Springer-Verlag, 1997.

\bibitem[Zadrozny and Elkan(2002)]{Zadrozny2002}
B.~Zadrozny and C.~Elkan.
\newblock Transforming classifier scores into accurate multiclass probability
  estimates.
\newblock In \emph{KDD}, pages 694--699, 2002.

\bibitem[Zhang(2002)]{zhang2002}
C.-H. Zhang.
\newblock Risk bounds in isotonic regression.
\newblock \emph{The Annals of Statistics}, 30\penalty0 (2):\penalty0 528--555,
  2002.

\bibitem[Zinkevich(2003)]{Zinkevich}
M.~Zinkevich.
\newblock Online convex programming and generalized infinitesimal gradient
  ascent.
\newblock In \emph{ICML}, pages 928--936, 2003.

\end{thebibliography}
\DeclareRobustCommand{\VAN}[3]{#2} 

\appendix

\section{The Exponentiated Gradient (EG) bound}
\label{sec:appendix_EG_bound}

We will first cast the online isotonic regression problem to the equivalent problem of minimizing square loss over $(T+1)$-dimensional probability simplex $\Delta^{T+1}$.

Given $\f \in \isotonic$, define the $(T+1)$-dimensional vector of increments of $\f$ by $\p(\f) = (f_1 - f_0,f_2 - f_1,\ldots,f_{T+1} - f_T)$, where we used two dummy variables $f_0 = 0$ and $f_{T+1} = 1$.
Note that $\p(\f) \in \Delta^{T+1}$, and there is one-to-one mapping between elements from
$\isotonic$ and the corresponding elements from $\Delta^{T+1}$, with the inverse mapping $\f(\p)$ given by $f_t(\p) = \sum_{q=1}^t p_t$.
The loss in the simplex parameterization is given by:
\[
 \ell_t(\p_t) = \Big(y_{i_t} - \sum_{j \leq i_t} p_{t,j}\Big)^2
 = \Big(y_{i_t} - \p_t^\top \x_{i_t} \Big)^2,
\]
where $\x_{i_t}$ is the vector with the first $i_t$ coordinates equal to $1$. 
%
The Exponentiated Gradient (EG) algorithm \citep{eg} is defined through the update:
\[
 p_{t,j} = \frac{ p_{t-1,j} e^{- \eta \left(\nabla \ell_{t-1}(\p_{t-1})\right)_j}}{\sum_{k=1}^{T+1} p_{t-1,k} e^{- \eta \left(\nabla \ell_{t-1}(\p_{t-1})\right)_k}},
\]
with $\p_0$ being some initial distribution. The prediction of the algorithm is then $\hy_{i_t} = \sum_{j \leq i_t} p_{t,j}$.
We now use the standard upper bound for the regret of EG:
\begin{theorem}[Theorem 5.10 by \citealt{eg}] Let $\{(\x_t,y_t)\}_{t=1}^T$ be a sequence of outcomes such that for all $t$, $\max_{i} x_{t,i} - \min_i x_{t,i} \leq R$.
For any $\p \in \Delta^{T+1}$ with $L_T(\p) \leq K$ and $D(\p \| \p_0) \leq D$ for some $\p_0$, the EG algorithm with initial distribution $\p_0$ and learning
rate $\eta$ tuned as:
\[
 \eta = \frac{2 \sqrt{D}}{R(\sqrt{2K} + R\sqrt{D})},
\]
have the following bound on its cumulative loss:
 \[
  \widehat{L}_T \leq L_T(\p) + \sqrt{2 KD} + \frac{R^2 D(\p \| \p_0)}{2}.
 \]
\end{theorem}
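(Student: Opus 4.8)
The plan is to prove this by the standard relative-entropy potential argument for Exponentiated Gradient, specialized to the squared loss (which is what produces the factor of two in the gradient and the quadratic error term in the bound). Write $\x_t := \x_{i_t}$ and $\hy_t := \p_t^\top \x_t$, and for the fixed competitor $\p \in \Delta^{T+1}$ put $a_t := \hy_t - y_{i_t}$ and $b_t := \p^\top \x_t - y_{i_t}$, so that $\nabla \ell_t(\p_t) = 2 a_t \x_t$ and $\ell_t(\p_t) - \ell_t(\p) = a_t^2 - b_t^2$. Use the potential $\Phi_t := D(\p \| \p_t)$, with the convention $\p_1 := \p_0$, so that $\Phi_1 = D(\p\|\p_0)\le D$.

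First I would establish the one-step identity. From the EG update, $\ln \frac{p_{t,j}}{p_{t+1,j}} = \eta \big(\nabla \ell_t(\p_t)\big)_j + \ln Z_t$ where $Z_t := \sum_k p_{t,k} e^{-\eta (\nabla \ell_t(\p_t))_k}$, and summing against $\p$ gives
\[
\Phi_{t+1} - \Phi_t ~=~ \eta\, \p^\top \nabla \ell_t(\p_t) + \ln Z_t ~=~ 2\eta a_t\, \p^\top \x_t + \ln Z_t .
\]
Next I would bound $\ln Z_t$: all coordinates $x_{t,k}$ lie in an interval of width at most $R$ and $\sum_k p_{t,k} x_{t,k} = \hy_t$, so Hoeffding's lemma applied to the coordinate drawn according to $\p_t$ (with the parameter $s = -2\eta a_t$) yields $\ln Z_t \le -2\eta a_t \hy_t + \frac{1}{2}\eta^2 a_t^2 R^2$. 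Substituting, and using $\p^\top \x_t = b_t + y_{i_t}$ and $\hy_t = a_t + y_{i_t}$ so the $y_{i_t}$ terms cancel and $\p^\top\x_t - \hy_t = b_t - a_t$, gives $\Phi_{t+1} - \Phi_t \le 2\eta a_t(b_t - a_t) + \frac12 \eta^2 a_t^2 R^2$. The elementary identity $2a_t(b_t - a_t) = -(a_t^2 - b_t^2) - (a_t - b_t)^2$ then turns this into the per-round progress bound
\[
\eta\big(\ell_t(\p_t) - \ell_t(\p)\big) ~\le~ \Phi_t - \Phi_{t+1} - \eta (a_t - b_t)^2 + \tfrac{1}{2}\eta^2 a_t^2 R^2 .
\]

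Finally I would sum over $t = 1,\dots,T$, telescoping the $\Phi$ terms, dropping the nonnegative quantities $\Phi_{T+1}$ and $\eta \sum_t (a_t - b_t)^2$, and using $\Phi_1 \le D$ together with $\sum_t a_t^2 = \widehat L_T$ to obtain $\eta(\widehat L_T - L_T(\p)) \le D + \frac12 \eta^2 R^2 \widehat L_T$, i.e.\ $(1 - \frac12 \eta R^2)\widehat L_T \le L_T(\p) + D/\eta$. Rearranging, bounding $L_T(\p) \le K$, and plugging in the stated tuning $\eta = \frac{2\sqrt D}{R(\sqrt{2K} + R\sqrt D)}$ is then a routine optimization that produces $\widehat L_T \le L_T(\p) + \sqrt{2KD} + \frac{R^2 D}{2}$. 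The delicate points I expect to be the main obstacle are: (i) getting the Hoeffding constant exactly right, since the factor $2$ in $\nabla \ell_t = 2a_t\x_t$ promotes the usual $\frac18 s^2 R^2$ to $\frac12 \eta^2 a_t^2 R^2$, and it is precisely this term that matches the final $\frac12 R^2 D$; (ii) the inequality is self-referential, because the error term reintroduces $\widehat L_T$ via $\sum_t a_t^2$, so one must first solve for $\widehat L_T$ (which requires $\eta R^2 < 2$) and only then optimize $\eta$; and (iii) the bookkeeping between the query index $i_t$ and the round $t$, and keeping $\p$ fixed across all rounds so that $b_t$ is well defined.
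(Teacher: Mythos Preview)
The paper does not actually prove this statement: it is quoted as Theorem~5.10 of Kivinen and Warmuth and then applied as a black box (with $R=1$) to obtain the $O(\sqrt{T\log T})$ bound for EG on isotonic regression. So there is no ``paper's own proof'' to compare against.

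Your argument is the standard relative-entropy potential derivation with Hoeffding's lemma controlling $\ln Z_t$, and everything up to and including the self-referential inequality $(1-\tfrac12\eta R^2)\,\widehat L_T \le L_T(\p) + D(\p\|\p_0)/\eta$ is correct. The gap is in the last step you call ``routine'': plugging the stated $\eta$ and the assumptions $L_T(\p)\le K$, $D(\p\|\p_0)\le D$ into that inequality does \emph{not} yield the displayed bound. Concretely, take $R=K=D=1$ and $L_T(\p)=K=1$; then $\eta=2/(\sqrt2+1)$, $1-\eta R^2/2=2-\sqrt2$, $D/\eta=(\sqrt2+1)/2$, and your inequality gives $\widehat L_T \le (8+5\sqrt2)/4\approx 3.77$, whereas the theorem claims $\widehat L_T \le 1+\sqrt2+\tfrac12\approx 2.91$. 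Optimising $\eta$ does not close this gap either. The reason is that Hoeffding is looser here than the per-step inequality tailored to the quadratic loss that Kivinen and Warmuth actually use (their Lemma~5.8), which leads directly to a relation of the form $a\,\widehat L_T \le L_T(\p) + b\,D(\p\|\p_0)$ with the sharp $a,b$ and hence to the stated constants after tuning. Your route still gives a bound of the same form and the correct $O(\sqrt{T\log T})$ rate --- which is all the present paper needs --- but as a proof of the theorem \emph{as stated} it falls short by a constant factor. (Incidentally, the middle term in the paper's restatement appears to be missing an $R$; this is harmless in the application since $R=1$.)
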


We apply this theorem to our problem with the sequence permuted by $(i_1,\ldots,i_t)$ and $R=1$. 
We choose $\p_0$ to be a uniform distribution on $\Delta^{T+1}$, which means $D(\p \| \p_0) \leq \log (T+1) = D$. We also use a crude bound on the loss of
best comparator $\p^* = \argmin_{\p} L_T(\p)$, $L_T(\p^*) \leq \frac{1}{4} T = K$ (this is because the loss of the best comparator is lower than the loss
of the constant function $\f$ equal to the arithmetic mean of the data). This suggests tuning the learning rate to:
\[
 \eta = \frac{2\sqrt{\log(T+1)}}{\sqrt{\frac{T}{2}} + \sqrt{\log(T+1)}},
\]
to get the following regret bound:
\[
 \regret_T \leq \sqrt{\frac{T \log(T+1)}{2}} + \frac{\log(T+1)}{2}.
\]

\section{Proof of Theorem~\ref{thm:exp_weights_uniform} (bound for the Exponential Weights algorithm)}
\label{sec:appendix_Exp_Weights_proof}

Let the adversary reveal the labels in isotonic order ($i_t = t$ for all $t$), and they are all equal to $0$. At trial $t$, the prediction of the algorithm $\hy_t = \f_t^\top \x_t = f_{t,t}$
is given by:
\[
 \hy_t = \int_{\isotonic} f_t p_t(\f) \dif f_1 \ldots \dif f_T, 
 \qquad \text{where} 
 \quad p_t(\f) = \frac{e^{-\frac{1}{2}\sum_{q < t} f_q^2}}{\underbrace{\int_{\isotonic} e^{-\frac{1}{2}\sum_{q < t} f_q^2} \dif f_1 \ldots \dif f_T}_{=Z}},
\]
We calculate the marginal distribution $p_t(f_t = z)$:
\begin{align*}
 p_t(f_t = z) &= \int_{0 \leq f_1 \leq f_{t-1} \leq z \leq f_{t+1} \ldots \leq f_T \leq 1} p_t(\f) \dif f_1 \ldots \dif f_{t-1} \dif f_{t+1} \ldots \dif f_T \\ 
&= \frac{1}{Z} \bigg( \int_{0 \leq f_1 \leq \ldots \leq f_{t-1} \leq z} e^{-\frac{1}{2}\sum_{q < t} f_q^2} \dif f_1 \ldots \dif f_{t-1} \bigg)
 \bigg( \int_{z \leq f_{t+1} \leq \ldots \leq f_T \leq 1} \dif f_{t+1} \ldots \dif f_T \bigg) \\
 &= \frac{1}{Z} G(z,t-1) \frac{(1-z)^{T-t}}{(T-t)!},
\end{align*}
where:
\[
 G(z,n) = \int_{0 \leq f_1 \leq \ldots \leq f_n \leq z} e^{-\frac{1}{2} \sum_{t=1}^n f_t^2} \dif f_1 \ldots \dif f_n.
\]
We now calculate $G(z,n)$. Let $F(x) = \int e^{-\frac{1}{2}x^2} \dif x$ denote the antiderivative of the Gaussian.
Recursively applying the relation:
\[
 \int_{f_{t-1}}^z e^{-\frac{1}{2} f_t^2} \frac{1}{k!} (F(z) - F(f_t))^k
 = \frac{(F(z) - F(f_{t-1}))^{k+1}}{(k+1)!}.
\]
we get:
\[
G(z,n) = \frac{(F(z) - F(0))^n}{n!},
\]
so that:
\[
p_t(f_t = z) =  \frac{1}{Z'} (1-z)^{T-t}(F(z) - F(0))^{t-1}, \qquad \text{where} \quad Z' = Z (t-1)! (T-t)! \, .
\]
Denote $p_t(f_t=z)$ concisely as $\phi(z)$. Then, we have:
\[
\hy_t = \int_{0}^1 z \phi(z) \dif z. 
\]
Assume $t>1$ and let $\alpha = \frac{t-1}{T-1}$; note that $0 < \alpha \leq 1$. Define:
 \[
g(z) =  (1-\alpha) \log(1-z) + \alpha \log(F(z) - F(0)).
\]
Note that $\phi(z) = \frac{1}{Z'} e^{(T-1)g(z)}$. We have:
\[
g'(z) = -\frac{1-\alpha}{1-z} + \frac{\alpha}{F(z)-F(0)} e^{-\frac{1}{2}z^2},
\]
and:
\[
g''(z) = -\frac{1-\alpha}{(1-z)^2} - \frac{\alpha}{(F(z)-F(0))^2} e^{-z^2}
- \frac{\alpha}{F(z)-F(0)} z e^{-\frac{1}{2}z^2} < 0.
\]
Thus, $g$ is (strictly) concave, which implies that $\phi$ is log-concave. Furthermore,
due to strict concavity of $F(z)$ for $z > 0$, we have:
$F(0) < F(z) - z e^{-\frac{1}{2}z^2}$ for $z > 0$, which implies:
\[
g'(z) < -\frac{1-\alpha}{1-z} + \frac{\alpha}{z} \qquad \text{for} \,\, z >0,
\]
so that $g'(\alpha) < 0$. On the other hand, also from the concavity of $F(z)$,
$F(z) \leq F(0) + z$, which together with $e^{-\frac{1}{2}z^2} \geq 1 - \frac{1}{2}z^2$ implies:
\[
 g'(z) \geq -\frac{1-\alpha}{1-z} + \frac{\alpha\left(1-\frac{1}{2}z^2\right)}{z}.
\]
This means that:
\[
 g'\left(\frac{\alpha}{2}\right) \geq -\frac{1-\alpha}{1-\frac{\alpha}{2}} + 2\left(1-\frac{1}{8}\alpha^2\right)
 = \frac{1}{1-\frac{\alpha}{2}} - \frac{1}{4}\alpha^2 \geq 1 > 0.
\]
Thus $g'(z)$ switches the sign between $\frac{\alpha}{2}$ and $\alpha$, which means 
that the (unique) maximizer $z^* = \argmax g(z)$ is in the range $\left(\frac{\alpha}{2}, \alpha \right)$.

We now use \citep[Proposition 5.2]{logconcave} which states that for log-concave densities, the density at the mean
is not much smaller than the density at the mode:
\[
 \frac{1}{\sqrt{3} e} \sup_{z} \phi(z) \leq \phi(\hy_t) \leq \sup_{z} \phi(z),
\]
which means that after taking logarithms, dividing by $T-1$ and using the definition of $z^*$,
\[
 g(z^*) - \frac{1}{T-1} (1 + \log \sqrt{3}) \leq g(\hy_t) \leq g(z^*).
\]
From concavity of $g$,
\[
 g(\hy_t) \leq g\left(\frac{\alpha}{2}\right) + g'\left(\frac{\alpha}{2}\right) \left(\hy_t - \frac{\alpha}{2}\right)
 \leq g(z^*) + g'\left(\frac{\alpha}{2}\right) \left(\hy_t - \frac{\alpha}{2}\right),
\]
which, together with $g'\left(\frac{\alpha}{2}\right) \geq 1$, implies:
\[
\hy_t \geq \frac{\alpha}{2} + \frac{g(\hy_t) - g(z^*) }{g'\left(\frac{\alpha}{2}\right)}
\geq \frac{\alpha}{2} - \frac{\frac{1}{T-1} (1 + \log \sqrt{3})}{g'\left(\frac{\alpha}{2}\right)}
\geq \frac{\alpha}{2} - \frac{1}{T-1} (1 + \log \sqrt{3}).
\]
Hence, $\hy_t \geq \frac{\alpha}{4}$ when:
\[
 \frac{\alpha}{2} - \frac{1}{T-1} (1 + \log \sqrt{3}) \geq \frac{\alpha}{4} \qquad \Longrightarrow \qquad
 T \geq 1 + \frac{4(1 + \log \sqrt{3})}{\alpha}.
\]
Note, that this holds when $\alpha \geq \frac{1}{2}$ and  when $T \geq 14$. 
Therefore, when $T \geq 14$, for all $\alpha \geq \frac{1}{2}$, we have $\hy_t \geq \frac{\alpha}{4} \geq \frac{1}{8}$,
which means $\ell_t(\f_t) = (\hy_t-0)^2 \geq \frac{1}{64}$.
Since $\alpha \geq \frac{1}{2}$ is implied by $t \geq \lfloor T/2 \rfloor + 1$, we conclude that when $T \geq 14$,
\[
 \regret_T = \widehat{L}_T - \min_{f \in \isotonic} L_T(\f) = \widehat{L}_T \geq \sum_{t = \left\lfloor \frac{T}{2} \right\rfloor + 1}^T \frac{1}{64} \geq \frac{T}{128}.
\]

\section{Full proof of Theorem~\ref{thm:lower_bound}}
\label{sec:appendix_lower_bound}
We proceed by constructing the difficult sequence explicitly. 
First, split the $T$ points $(1,\ldots,T)$ 
into $K$ consecutive segments $(1,\ldots,m), (m+1,\ldots,2m), \ldots, (m(K-1)+1,\ldots,T)$, 
where in each segment there are $m = \frac{T}{K}$ consecutive
points (assume for now that $T$ is divisible by $K$). 
Let $t \in k$ mean that $t$ is in the $k$-th segment, $k=1,\ldots,K$,
i.e.\ $t \in k$ whenever $k = \left\lceil \frac{t}{m} \right\rceil$.
Now, suppose the adversary picks a $K$-vector $\p = (p_1,\ldots,p_K) \in [0,1]^K$ such that
$p_1 \leq p_2 \leq \ldots \leq p_K$ and generates the labels
in isotonic order (any order would work as well) such that $y_t \sim \mathrm{Bernoulli}(p_k)$ when $t \in k$.
Let $\f_{\p} \in \mathcal{F}$ denote an isotonic function such that $f_t = p_k$ when $t \in k$. 
We lower bound the expected regret:
\begin{align*}
 \mathbb{E}_{\p}[\mathrm{Reg}_T]
 &= \mathbb{E}_{\p} \left[ \widehat{L}_T - \inf_{\f \in \mathcal{F}} L_T(\f) \right] \\
 &\geq \mathbb{E}_{\p} \left[ \widehat{L}_T - L_T(\f_{\p}) \right] \\
 &= \sum_{k=1}^K \mathbb{E}_{\p} \left[ \sum_{t \in k} (\widehat{y}_t - y_t)^2 - (p_k - y_t)^2 \right] \\
 &= \sum_{k=1}^K \mathbb{E}_{\p} \left[ \sum_{t \in k} (\widehat{y}_t - p_k)(\widehat{y}_t + p_k - 2y_t)^2 \right] \\
 &= \sum_{k=1}^K \mathbb{E}_{\p} \left[ \sum_{t \in k} (\widehat{y}_t - p_k)^2 \right],
\end{align*}
where the last equality is from $\mathbb{E}_{p_k}[y_t] = p_k$.
Now we assume the adversary picks $\p$ from the following set:
\[
 \mathcal{P} = \left\{\p = (p_1,\ldots,p_K) \colon p_k \in \left\{\frac{1}{4} + \frac{k-1}{2K}, \frac{1}{4} + \frac{k}{2K}\right\} \right\}.
\]
There are $2^K$ vectors in $\mathcal{P}$, all satisfying $p_1 \leq p_2 \leq \ldots \leq p_K$, and
$\frac{1}{4} \leq p_k \leq \frac{3}{4}$ for all $k$. For instance, when $K=2$,
$\mathcal{P} = \left\{(\frac{1}{4},\frac{1}{2}),(\frac{1}{4},\frac{3}{4}),(\frac{1}{2},\frac{1}{2}),
(\frac{1}{2},\frac{3}{4}) \right\}$.

Fix $k$ and denote $p_{k,0} = \frac{1}{4} + \frac{k-1}{2K}$ and $p_{k,1} = \frac{1}{4} + \frac{k}{2K}$,
i.e.\ $p_k \in \{p_{k,0},p_{k,1}\}$. Define:
\[
\widehat{p}_k = \argmin_{p_{k,i},i=0,1} \left\{ |\overline{y}_k - p_{k,i}|  \right\},
\]
where $\overline{y}_k = \frac{1}{m} \sum_{t \in k} \widehat{y}_t$.
We now show that:
\begin{equation}
\sum_{t \in k} (\widehat{y}_t - p_k)^2 \geq \frac{m}{4} (\widehat{p}_k - p_k)^2.
\label{eq:inequality_widehat_f_widehat_p}
\end{equation}
Without loss of generality, assume $p_k = p_{k,0}$. Then, if $\widehat{p}_k = p_{k,0}$, the inequality clearly holds
because the right-hand side is $0$. On the other hand, if $\widehat{p}_k = p_{k,1}$, then 
from the definition of $\widehat{p}_k$ we have $|\overline{y}_k - p_{k,1}| \leq |\overline{y}_k - p_{k,0}|$,
which means $\overline{y}_k \geq \frac{1}{2}(p_{k,0} + p_{k,1})$.
This implies:
\begin{align*}
\sum_{t \in k} (\widehat{y}_t - p_{k,0})^2
&= \sum_{t \in k} (\widehat{y}_t - \overline{y}_k + \overline{y}_k - p_{k,0})^2 \\
&= \bigg( \sum_{t \in k} (\widehat{y}_t - \overline{y}_k)^2 + 2(\widehat{y}_t - \overline{y}_k)(\overline{y}_k - p_{k,0})\bigg) + m(\overline{y}_k - p_{k,0})^2 \\
&= \bigg( \sum_{t \in k} (\widehat{y}_t - \overline{y}_k)^2 \bigg) + m (\overline{y}_k - p_{k,0})^2 \\
&\geq m (\overline{y}_k - p_{k,0})^2 \geq \frac{m}{4} (p_{k,1} - p_{k,0})^2.
\end{align*}
Thus, (\ref{eq:inequality_widehat_f_widehat_p}) holds. Note that $\widehat{p}_k$ depends on $\{\widehat{y}_t, t \in k\}$
(which in turn depend on the labels), but it does not depend on $p_k$. 
Hence, the worst-case regret can be lower bounded by:
\[
 \max_{y_1,\ldots,y_T} \mathrm{Reg}_T \geq \max_{\p \in \mathcal{P}} \sum_{k=1}^K \frac{m}{4} \mathbb{E}_{\p}[(\widehat{p}_k - p_k)^2].
\]
We will now use Assouad’s lemma \citep{festschrift,tsybakov} to bound the sum on the right-hand side. Let $\Omega = \{0,1\}^K$ be the set
of all $2^K$ binary sequences of length $K$. The sequences from $\Omega$ will index the distributions from $\mathcal{P}$ by denoting $\p_{\oomega}=(p_{1,\omega_1},\ldots,p_{K,\omega_K}) \in \mathcal{P}$
for any $\oomega \in \Omega$.
We also define $\widehat{\oomega} \in \Omega$ as $\p_{\widehat{\oomega}} = (\widehat{p}_1,\ldots,\widehat{p}_K)$, i.e.\ $p_{k,\widehat{\omega}_k}=\widehat{p}_k$ for any $k$.
In this notation:
\[
\max_{\p \in \mathcal{P}} \sum_{k=1}^K \frac{m}{4} \mathbb{E}_{\p}[(\widehat{p}_k - p_k)^2] 
\geq \max_{\oomega \in \Omega} \frac{m}{4} \mathbb{E}_{\p_{\oomega}}\|\p_{\oomega} - \p_{\widehat{\oomega}} \|^2
= \max_{\oomega \in \Omega} \frac{m}{4} \frac{1}{4K^2} \mathbb{E}_{\p_{\oomega}} \rho(\oomega, \widehat{\oomega}),
\]
where $\rho(\cdot,\cdot)$ denotes the Hamming distance between two binary sequences.
Using \citep[Theorem 2.12.ii]{tsybakov}:
\[
\min_{\widehat{\oomega} \in \Omega} \max_{\oomega \in \Omega}  \mathbb{E}_{\p_{\oomega}} \rho(\oomega, \widehat{\oomega}) 
\geq \frac{K}{2} \left(1-\max_{\oomega, \oomega' \in \Omega \colon \rho(\oomega,\oomega')=1} \mathrm{TV}(\p_{\oomega},\p_{\oomega'})\right),
\]
where $\mathrm{TV}(\cdot,\cdot)$ is the total variation distance between distributions over $(y_1,\ldots,y_T)$.
From Pinsker's inequality:
\[
 \mathrm{TV}^2(\p_{\oomega},\p_{\oomega'}) \leq \frac{1}{2} D(\p_{\oomega}\| \p_{\oomega'}) = \frac{1}{2} \sum_{k=1}^K D(p_{k,\omega_k} \| p_{k,\omega'_k}),
\]
where $D(\cdot \| \cdot)$ is the Kullback-Leibler divergence and we used the fact the $\p \in \mathcal{P}$ are product distributions over segments.
Since $\rho(\oomega,\oomega')=1$, $p_{k,\omega_k}=p_{k,\omega'_k}$ for all but one $k$, hence all but one terms in the sum disappear
and we have
$
\mathrm{TV}^2(\p_{\oomega},\p_{\oomega'}) \leq \frac{1}{2} D(p_{k,0} \| p_{k,1}) 
$
for some $k$. Using the Taylor approximation of the KL-divergence with respect to $p_{k,1}$ around $p_{k,0}$, we get:
\[
D(p_{k,0} \| p_{k,1}) = \frac{m}{2} \frac{(p_{k,1} - p_{k,0})^2}{\widetilde{p}(1-\widetilde{p})},
\]
where $\widetilde{p}$ is some convex combination of $p_{k,0}$ and $p_{k,1}$. Since $p_{k,0},p_{k,1} \in [\frac{1}{4},\frac{3}{4}]$,
$\frac{1}{\widetilde{p}(1-\widetilde{p})}$ is maximized for $\widetilde{p} \in \{\frac{1}{4},\frac{3}{4}\}$, so that:
\[
 \mathrm{TV}^2(p_{k,0},p_{k,1}) \leq \frac{4 m}{3} (p_{k,1} - p_{k,0})^2 = \frac{m}{3 K^2}.
\]
Plugging this into our bound gives:
\[
\min_{\widehat{\oomega} \in \Omega} \max_{\oomega \in \Omega}  \mathbb{E}_{\p_{\oomega}} \rho(\oomega, \widehat{\oomega})
\geq \frac{K}{2} \left(1-\frac{\sqrt{m}}{\sqrt{3}K}\right), 
\]
which implies:
\[
 \max_{y_1,\ldots,y_T} \mathrm{Reg}_T \geq \frac{m}{32K} \left(1-\frac{\sqrt{m}}{\sqrt{3}K}\right)
 = \frac{T}{32K^2} \left(1-\frac{\sqrt{T}}{\sqrt{3} K^{3/2}}\right),
\]
where we used the fact that $m=\frac{T}{K}$.
%
Choosing $K = c T^{1/3}$ for some $c > 1$ gives:
\[
 \max_{y_1,\ldots,y_T} \mathrm{Reg}_T \geq \frac{c^{3/2} - 3^{-1/2}}{32 c^{7/2}} T^{1/3}.
\]
Choosing any $c>3^{-1/3}$, $c = O(1)$, such that $K$ divides $T$ finishes the proof.

\section{Proofs for Section~\ref{sec:noise-free} (noise-free case)}
\label{sec:appendix_noise_free}

\subsection{Proof of Theorem~\ref{thm:noise-free-any-order} (arbitrary order of outcomes)}

We first give a sequence of outcomes such that when $T = 2^k - 1$ for some positive integer $k$, any algorithm will suffer
exactly $\frac{1}{4} \log_2(T+1)$ loss. The adversary picks a point in the middle of the range, $i_1 = 2^{k-1}$.
After the algorithm predicts, the adversary chooses $y_{i_1}$ to be $0$ or $1$, depending which of these two incurs more loss to the algorithm.
Hence, no matter what the algorithm predicts, the loss is at least $\frac{1}{4}$.
If $y_{i_1} = 0$, then $2^{k-1}-1$ points on the left-hand side of $y_{i_1}$
are labeled to $0$ in the next trials (which is required due to noise-free regime), and the algorithm will possibly suffer no loss on these points.
Then, the adversary repeats the above procedure of choosing the middle point on the remaining $2^{k-1}-1$ points on the right-hand
side of $y_{i_1}$. Analogously, when $y_{i_1}=1$, the points on the right-hand side are all labeled to $1$, and the adversary
recursively play on remaining the left-hand side points.
This procedure can be carried out $k$ times, until no more points remains. Hence, the total loss incurred by
the algorithm is at least $\frac{1}{4} k = \frac{1}{4} \log_2(n+1)$.

Next, we determine the upper bound on the value of the minimax regret:
\[
 V = \min_{\hy_{i_1}} \max_{y_{i_1}} \ldots \min_{y_{i_T}} \max_{y_{i_T}} \sum_{t=1}^T (y_{i_t} - \hy_{i_t})^2 \leq \frac{1}{4} \log_2 (T+1),
\]
where the labels are constrained to be isotonic, $y_1 \leq \ldots \leq y_T$. We will
get the predictions of the minimax algorithm as a by-product of the calculations. This implies that the minimax algorithm
suffers regret at most $\frac{1}{4} \log_2 (T+1)$, and the bound on $V$ is tight whenever $T=2^k-1$.

In the first trial, the adversary reveals outcome $i_1$, which splits the set of unknown labels into
two disjoint sets $(y_1,\ldots,y_{i_t - 1})$ and $(y_{i_t+1},\ldots,y_T)$. The minimax algorithm knows that
$0 \leq y_1, \ldots, y_{i_t-1} \leq y_{i_t}$ and $y_{i_t} \leq y_{i_t+1},\ldots,y_T \leq 1$ (due to noise-free case).
Then, in the future trials,
each of these sets of unknown consecutive labels 
will be recursively split into smaller sets. At any moment of time, for any set
of unknown labels $(y_i, \ldots, y_j)$ with $j \geq i$, we know that
$y_{i-1} \leq y_i, \ldots, y_j \leq y_{j+1}$, and $y_{i-1}$ and $y_{j+1}$ has already been revealed
(we use $y_0 = 0$ and $y_{T+1} = 1$). Hence, the minimax algorithm will play a separate minimax game
for each set of unknown labels 
$(y_i, \ldots, y_j)$, bounded in the range $[y_{i-1}, y_{j+1}]$. We use this observation as a basis
for the recursion. Let $V(u,v,n)$ denote the minimax value of the game for a set
of $n$ not yet revealed consecutive labels lower-bounded by $u$, and upper-bounded by $v$.
We get the recursion:
\begin{equation}
 V(u,v,n+1) = \max_{k \in \{0,\ldots,n\}} V(u,v,n+1,k),
\label{eq:recursion_V_k}
\end{equation}
where:
\[
 V(u,v,n+1,k) = \min_{\hy \in [u,v]} \max_{y \in [u,v]} \left\{
 (y - \hy)^2 + V(u,y,k) + V(y,v,n-k)
 \right\},
\]
which follows from the fact that first the adversary reveals $(k+1)$-th point,
then the algorithm predicts with $\hy$ for that point, and finally the outcome $y$ is revealed, while
the set is split into two sets of smaller size. The minimax regret can be read out from $V(0,1,T)$.
To start the recursion, we define $V(u,v,0) = 0$.

We now prove by induction on $n$ that:
\begin{equation}
 V(u, v, n) = \beta_n (v-u)^2,
 \label{eq:V_beta_n_guess}
\end{equation}
where $\beta_n$ is some coefficient independent of $u$ and $v$.
Assume $n+1$ unknown labels, lower-bounded by $u$,
and upper-bounded by $v$. We fix $k \in \{0,\ldots,n\}$
and calculate the optimal prediction of the algorithm for $V(u,v,n+1,k)$:
\[
\hy = \argmin_{\hy \in [u,v]} \max_{y \in [u,v]} \left\{(y - \hy)^2 + \beta_k (y - u)^2 + \beta_{n-k} (v - y)^2 \right\}, 
\]
where we used the inductive assumption. The function inside $\max$ is convex
in $y$, hence the solution w.r.t.\ $y$ is $y \in \{u,v\}$. First note that if $\beta_k - \beta_{n-k} > 1$,
the function inside $\max$ is increasing in $y$ for any choice of $\hy \in [u,v]$, hence the optimal
choice for the adversary is $y=v$, and the optimal choice for the algorithm is $\hy = v$. Similarly,
if $\beta_k - \beta_{n-k} < -1$, the function inside $\max$ is decreasing in $y$, which results in
the optimal choice $y=u$ and $\hy = u$. When $-1 \leq \beta_k - \beta_{n-k} \leq 1$, it is easy
to check that the optimal prediction is obtained by setting the function inside $\max$ equal for
both choices of $y \in \{u,v\}$. This gives:
\[
 \hy = \frac{u+v}{2} + \frac{u-v}{2} \left(\beta_k - \beta_{n-k} \right) \in [u,v].
\]
Thus, depending on the value of $\beta_k - \beta_{n-k}$, $V(u,v,n+1,k)$ is given by:
\[
V(u,v,n+1,k) = (u-v)^2 \beta_{n,k},
\]
where
\begin{equation}
 \beta_{n,k} = \left\{
 \begin{array}{ll}
  \beta_{k} & \qquad \text{if~~} \beta_{k} - \beta_{n-k} > 1, \\
 \frac{1}{4}(\beta_k -\beta_{n-k})^2 + \frac{1}{2}(\beta_k + \beta_{n-k}) + \frac{1}{4} & \qquad \text{if~~} -1 \leq \beta_{k} - \beta_{n-k} \leq 1, \\
  \beta_{n-k} & \qquad \text{if~~} \beta_{k} - \beta_{n-k} < -1.
 \end{array}
\right.
\label{eq:def_beta_n_k}
\end{equation}
From (\ref{eq:recursion_V_k}), we have:
\[
 \beta_{n+1} = \max_{k \in \{0,\ldots,n\}} \beta_{n,k},
\]
which proves our inductive hypothesis (\ref{eq:V_beta_n_guess}).

What is left to show is that $\beta_n \leq \frac{1}{4} \log_2(n+1)$. We will prove it by induction on $n$. For $n=0$,
$\beta_0 = 0$ and thus the bound trivially holds.
We now show that $\beta_{n,k}$, as defined in (\ref{eq:def_beta_n_k}), is nondecreasing in $\beta_k$ 
and $\beta_{n-k}$. We fix $\beta_{n-k}$, and calculate the derivative with respect to $\beta_k$:
\[
 \frac{\dif \beta_{n,k}}{\dif \beta_k} = \left\{
 \begin{array}{ll}
  1 & \qquad \text{if~~} \beta_{k} - \beta_{n-k} > 1, \\
 \frac{1}{2}(\beta_k -\beta_{n-k}+1) & \qquad \text{if~~} -1 \leq \beta_{k} - \beta_{n-k} \leq 1, \\
  0 & \qquad \text{if~~} \beta_{k} - \beta_{n-k} < -1,
 \end{array}
\right. 
\]
which is nonnegative. Hence, $\beta_{n,k}$ is nondecreasing with $\beta_k$ for any fixed $\beta_{n-k}$. An analogous arguments shows that
$\beta_{n,k}$ is nondecreasing with $\beta_{n-k}$ for any fixed $\beta_k$. Hence, we can replace $\beta_k$ and $\beta_{n-k}$ by their upper bounds from
the inductive argument,
and then $\beta_{n,k}$ (as well as $\beta_{n+1}$) will not decrease. Thus, to show that $\frac{1}{4} \log_2((n+1)+1)$ is the upper bound on $\beta_{n+1}$,
it suffices to show that for any $n,k$, $\beta_{n,k} \leq \frac{1}{4} \log_2(n+2)$ after substituting 
$\beta_k = \frac{1}{4} \log_2(k+1)$ and $\beta_{n-k} = \frac{1}{4} \log_2(n-k+1)$ in (\ref{eq:def_beta_n_k}).

We proceed by cases in (\ref{eq:def_beta_n_k}). When $\beta_k - \beta_{n-k} > 1$, $\beta_{n,k} = \beta_k = \frac{1}{4} \log_2(k+1)
\leq \frac{1}{4} \log_2(n+2)$, because $k \leq n$. Case $\beta_k - \beta_{n-k} < -1$ is covered in an analogous way. We are left with the case
$-1 \leq \beta_k - \beta_{n-k} \leq 1$. It suffices to show that:
\begin{equation}
 \underbrace{(\beta_k -\beta_{n-k})^2 + \frac{1}{2}(\beta_k + \beta_{n-k}) + \frac{1}{4}}_{=g(\beta_k,\beta_{n-k})} \leq
 \underbrace{\frac{1}{4}\log_2(2^{4\beta_k} + 2^{4 \beta_{n-k}})}_{=f(\beta_k,\beta_{n-k})},
\label{eq:beta_bound}
\end{equation}
because the right-hand side is equal to $\frac{1}{4}\log_2(n+2)$ when $\beta_k=\frac{1}{4} \log_2(1+k)$ and $\beta_{n-k}=\frac{1}{4} \log_2(1+n-k)$.
Assume w.l.o.g.\ that $\beta_k \leq \beta_{n-k}$ (because both $f(\cdot,\cdot)$ and $g(\cdot,\cdot)$ are symmetric in their arguments).
For any $\delta$, $g(x+\delta,y+\delta) = g(x,y) + \delta$, and similarly $f(x+\delta,y+\delta) = f(x,y) + \delta$.
Thus, proving $f(x,y) \geq g(x,y)$ is equivalent to proving $f(0,y-x) \geq g(0,y-x)$. 
Given the condition $-1 \leq \beta_{n-k} - \beta_k \leq 1$, we thus need to show that:
$f(0,y) \geq g(0,y)$ for any $0 \leq y \leq 1$, which translates to:
\[
 \log_2\left(1 + 2^{4y}\right) \geq (1+y)^2, \qquad \text{for~~} 0 \leq y \leq 1.
\]
This inequality can be shown
by splitting the range $[0,1]$ into $[0,\frac{1}{4}]$, $[\frac{1}{4},\frac{3}{4}]$ and $[\frac{3}{4},1]$,
lower-bounding the left-hand side by its Taylor expansion up to the second order around points $0$, $\frac{1}{2}$, and $\frac{3}{4}$,
respectively (with the second derivative replaced by its lower bound in a given range), and showing that the corresponding quadratic
inequality always holds within its range. We omit the details here. Unfortunately, we were unable to find a more elegant
proof of this inequality.

\subsection{Proof of Theorem~\ref{thm:noise-free-isotonic-order} (isotonic order of outcomes)}

We determine the value of the minimax regret:
\[
 V = \min_{\hy_1} \max_{y_1 \in [0,1]} \min_{\hy_1} \max_{y_2 \in [y_1,1]} \ldots \min_{\hy_T} \max_{y_T \in [y_{T-1},1]} \sum_{t=1}^T (y_t - \hy_t)^2,
\]
getting the predictions of the minimax algorithm as a by-product of the calculations. Note that any algorithm will suffer regret at least $V$ for some sequences of labels,
while the minimax algorithm will suffer regret at most $V$ for any sequence of labels.
Let:
\[
 V_{T-t}(y_t) =  \min_{\hy_{t+1}} \max_{y_{t+1} \in [y_t,1]} \ldots \min_{\hy_T} \max_{y_T \in [y_{T-1},1]}  \sum_{q=t+1}^T (y_q - \hy_q)^2
\]
be the value-to-go function, which is the worst-case loss suffered by the minimax algorithm in $T-t$ trials $t+1,\ldots,T$, given
the last revealed label was $y_t$. The minimax regret $V$ can be read out from $V_T(0)$. We use the following recursion:
\[
 V_n(c) = \min_{\hy} \max_{y \in [c,1]} \left\{ (y-\hy)^2 + V_{n-1}(y) \right\},
\]
where we used $V_0(y) = 0$.
We start with calculating $V_1(c)$ (which corresponds to the last trial $t=T$). The minimax prediction is given by:
\[
 \argmin_{\hy} \max_{y \in [c, 1]} (\hy - y)^2
 = \argmin_{\hy} \max \{(\hy-c)^2, (\hy-1)^2 \}
 = \frac{c + 1}{2},
\]
and the value-to-go function is $V_1(c) = \frac{1}{4} (1 - c)^2$.
We now prove by induction that $V_n(c) = \alpha_n (1 - c)^2$ for
some $\alpha_n > 0$ (which clearly holds for $n=1$ with $\alpha_1=\frac{1}{4}$, as shown above).
By the induction argument,
\[
V_n(c) = \min_{\hy} \max_{y \in [c,1]} \left\{ (\hy-y)^2 + \alpha_{n-1} (1-y)^2 \right\}
= \min_{\hy} \max_{y \in \{c,1\}} \left\{ (\hy-y)^2 + \alpha_{n-1} (1-y)^2 \right\}.
\]
The last equality is due to the fact that the function inside the $\min\max$ is convex in $y$, therefore the optimal $y$ is on the boundary of 
the feasible range $[c,1]$. It is easy to check that the optimal $\hy$ makes the expression inside $\min\max$ equal for both choices of $y$,
so that:
\[
 \hy_t = \frac{c+1}{2} + \alpha_{n-1} \frac{c-1}{2},
\]
The expression inside max is a convex function of $y_t$, therefore the optimal $y_t$ is on the boundary of feasible range $\{y_{t-1},b\}$.
The algorithm predicts with $\hy_t$, such that the expression inside max has the same value for $y_t = y_{t-1}$ and $y_t = b$.
This gives:
\[
 \hy_t = \frac{b+y_{t-1}}{2} - \alpha_{t+1}^T \frac{b-y_{t-1}}{2},
\]
and:
\[
 V_n(c) = (\hy - 1)^2 = \underbrace{\left(\frac{\alpha_{n-1} +1}{2} \right)^2}_{=\alpha_n} (1-c)^2.
\]
This finishes the inductive proof for $V_n(c)$. The value of the minimax regret is given by $V_T(0) = \alpha_T$. 
Now, given that $\alpha_1 = \frac{1}{4} < 1$, we have inductively for all $n$:
\[
 \alpha_n = \left(\frac{\alpha_{n-1} + 1}{2}\right)^2 \leq \left(\frac{1+1}{2}\right)^2 = 1,
\]
\end{document}